\documentclass{article}

\PassOptionsToPackage{numbers,compress,sort&compress}{natbib}

\usepackage[preprint]{icml2026}

\usepackage{microtype}               
\usepackage{graphicx}
\usepackage{subfigure}               
\usepackage{booktabs}
\usepackage{multirow}
\usepackage{wrapfig}
\usepackage{threeparttable}
\usepackage{colortbl}
\usepackage{url}
\usepackage{xspace}
\usepackage{enumitem}

\usepackage{amsmath}
\usepackage{amsfonts}
\usepackage{amssymb}
\usepackage{mathtools}
\usepackage{amsthm}
\usepackage{bbding}

\usepackage{chngcntr}

\usepackage{caption}
\usepackage{subcaption}

\usepackage{algorithm}
\usepackage{algorithmic}
\usepackage{xcolor}
\definecolor{c1}{HTML}{2F70AF} 
\definecolor{pink}{HTML}{747199}
\definecolor{yellow}{HTML}{cda380}

\usepackage[colorlinks=true,citecolor=pink,urlcolor=pink,linkcolor=pink]{hyperref}

\usepackage[capitalize,noabbrev]{cleveref}

\usepackage[textsize=tiny]{todonotes}

\theoremstyle{plain}
\newtheorem{theorem}{Theorem}[section]

\newtheorem{lemma}[theorem]{Lemma}

\theoremstyle{definition}
\newtheorem{definition}[theorem]{Definition}

\theoremstyle{remark}

\newcommand{\reduce}[1]{\textcolor{pink}{#1}}

\newcommand{\T}{\mathrm{T}}
\newcommand{\D}{\mathrm{D}}
\newcommand{\ie}{\textit{i}.\textit{e}., }
\newcommand{\eg}{\textit{e}.\textit{g}., }

\newcommand{\bst}[1]{{\textbf{\textcolor{red}{#1}}}}

\newcommand{\subbst}[1]{\textcolor{blue}{\underline{{#1}}}}
\newcommand{\scalea}[1]{\scalebox{0.9}{#1}}
\newcommand{\scaleb}[1]{\scalebox{1}{#1}}

\usepackage{pifont}
\icmltitlerunning{Deep Time-series Forecasting Needs Kernelized Moment Balancing}

\begin{document}
\twocolumn[
\icmltitle{Deep Time-series Forecasting Needs Kernelized Moment Balancing}
\begin{icmlauthorlist}
\icmlauthor{Licheng Pan}{1}
\icmlauthor{Hao Wang}{1}
\icmlauthor{Haocheng Yang}{2}
\icmlauthor{Yuqi Li}{3}
\icmlauthor{Qingsong Wen}{4}\\
\icmlauthor{Xiaoxi Li}{1}
\icmlauthor{Zhichao Chen}{5}
\icmlauthor{Haoxuan Li}{6}
\icmlauthor{Zhixuan Chu}{7}
\icmlauthor{Yuan Lu}{1}
\end{icmlauthorlist}
\icmlaffiliation{1}{Xiaohongshu Inc.}
\icmlaffiliation{2}{National University of Singapore}
\icmlaffiliation{3}{The City University of New York, CUNY}
\icmlaffiliation{4}{Squirrel AI}
\icmlaffiliation{5}{State Key Lab of General AI, School of Intelligence Science and Technology, Peking University}
\icmlaffiliation{6}{Center for Data Science, Peking University}
\icmlaffiliation{7}{College of Computer Science and Technology, Zhejiang University}
\icmlcorrespondingauthor{Zhixuan Chu}{zhixuanchu@zju.edu.cn}
\icmlcorrespondingauthor{Yuan Lu}{luyuan2@xiaohongshu.com}
\vskip 0.3in
]

\printAffiliationsAndNotice{} 

\begin{abstract}

Deep time-series forecasting can be formulated as a distribution balancing problem aimed at aligning the distribution of the forecasts and ground truths. According to Imbens’ criterion, true distribution balance requires matching the first moments with respect to any balancing function. We demonstrate that existing objectives fail to meet this criterion, as they enforce moment matching only for one or two predefined balancing functions, thus failing to achieve full distribution balance.
To address this limitation, we propose direct forecasting with kernelized moment balancing (KMB-DF). Unlike existing objectives, KMB-DF adaptively selects the most informative balancing functions from a reproducing kernel hilbert space (RKHS) to enforce sufficient distribution balancing. We derive a tractable and differentiable objective that enables efficient estimation from empirical samples and seamless integration into gradient-based training pipelines. Extensive experiments across multiple models and datasets show that KMB-DF consistently improves forecasting accuracy and achieves state-of-the-art performance. Code is available at~\url{https://anonymous.4open.science/r/KMB-DF-403C}.

\end{abstract}

\section{Introduction}
Deep time-series forecasting aims to utilize neural networks to predict future values from historical observations, which has been integral to various fields~\cite{acmsurvey,tsinghuasurvey}, such as asset pricing in finance~\citep{mars}, user visit prediction in e-commerce~\citep{chen2023mode}, and weather prediction in meteorology~\citep{application_weather2}.
Currently, the progress in this field primarily revolves around two aspects~\citep{wang2025nipstimeo1,wang2026iclrqdf}: \textit{(1) the devise of neural architectures serving as the forecast models, and (2) the design of learning objectives driving model training.} Both aspects are essential for improving forecast performance~\cite{qiudbloss}.

The devise of neural architectures has been widely studied. The key challenge arises from the input autocorrelation, where different steps in input sequence are correlated. To address this challenge, various architectures have been proposed~\citep{micn,Simpletm,tqnet}. A significant advancement lies in Transformer-based models, which employ self-attention mechanisms to model input autocorrelation while scaling effectively to complex tasks~\cite{lin2024cyclenet,itransformer,PatchTST,Time-LLM}. 
Other progresses include recurrent neural networks~\cite{P-sLSTM}, convolutional neural networks~\cite{Moderntcn}, and linear models~\cite{OLinear,DLinear}, each possessing unique inductive biases and strengths in modeling input autocorrelation.

The design of learning objectives has also garnered significant attention~\citep{soft-dtw,qiudbloss,psloss}. The key challenge arises from the label autocorrelation, where different steps in label sequence are correlated, which renders standard objectives like mean squared error (MSE) biased. 
To accommodate label autocorrelation, \citet{wang2026iclrdistdf} demonstrated that balancing the distributions of the forecast and label sequences ensures unbiased training of forecast models despite label autocorrelation. 
However, according to Imbens’ criterion~\cite{imbens2015causal}, if the two distributions are balanced, their first moments should be equivalent with respect to any balancing function. \textit{We demonstrate that existing objectives enforce this equivalence for only one or two predefined functions~\cite{wang2025nipstimeo1,wang2025iclrfredf,wang2026iclrqdf,wang2026iclrdistdf}; consequently, they fail to satisfy the criterion and achieve full distribution balancing essential for training forecast models.}

To address the limitation of existing objectives, we propose direct forecasting with kernelized moment balancing (KMB-DF). The core strategy involves adaptively selecting the most informative balancing functions from a reproducing kernel hilbert space (RKHS) associated with universal kernels. By leveraging the rich capacity of the RKHS, KMB-DF enforces sufficient distribution balancing with theoretical guarantees. Furthermore, we derive a tractable, differentiable objective that allows for efficient estimation from finite time-series samples and integrates seamlessly into gradient-based training pipelines. Experimental results confirm that KMB-DF consistently outperforms existing objectives, enhances the performance of diverse forecast models, and achieves state-of-the-art performance.

\textbf{Contributions.} The contributions in this work can be summarized as follows. \ding{182} \textbf{We identify a critical limitation in existing learning objectives}: they enforce first-moment equivalence only for limited balancing functions, thereby failing to achieve distribution balancing for training forecast models. \ding{183} \textbf{We propose KMB-DF, a framework that enforces distribution balance for training forecast models.} We provide theoretical guarantees for its balancing sufficiency and demonstrate its seamless integration into gradient-based training pipelines. \ding{184} \textbf{We substantiate the efficacy of KMB-DF through extensive experiments}, demonstrating that it consistently outperforms existing objectives and achieves state-of-the-art performance.


\section{Preliminaries}
\subsection{Problem definition}


In this section, we introduce the deep time-series forecast problem. We use uppercase bold letters (\eg $\mathbf{X}$) to denote matrices, lowercase bold letters (\eg $\mathbf{x}$) to denote vectors, and lowercase normal letters (\eg $x$) to denote scalars.  A time-series consists of a sequence of chronologically ordered observations, denoted as $\mathbf{S}=\{\mathbf{s}_1,\mathbf{s}_2,...,\mathbf{s}_\mathrm{M}\}\in \mathbb{R}^{\mathrm{M} \times \mathrm{D}}$, where $\mathrm{M}$ is the total number of observations and $\mathrm{D}$ denotes the number of covariates per observation. 

At a given time step $n$, the task definition requires the following components:
(1) the \textit{history sequence} $\mathbf{X} = [\mathbf{s}_{m-\mathrm{H}+1}, \ldots, \mathbf{s}_m] \in \mathbb{R}^{\mathrm{H} \times \D}$, where $\mathrm{H}$ is the history length; (2) the \textit{label sequence} $\mathbf{Y} = [\mathbf{s}_{m+1}, \ldots, \mathbf{s}_{m+\T}] \in \mathbb{R}^{\T \times \D}$, where $\T$ is the forecast horizon; (3) the \textit{forecast model} $g$, which is a neural network mapping $\mathbf{X}$ to a forecast sequence $\hat{\mathbf{Y}}$. The task is to train a neural network $g$ by optimizing an appropriate learning objective, such that for any given $\mathbf{X}$, the forecast $\hat{\mathbf{Y}}$ accurately approximates the ground-truth label sequence~\cite{wang2025nipstimeo1,tsinghuasurvey}.

\subsection{Learning objectives for deep time-series forecasting}\label{sec:obj}

The learning objective plays a central role in deep time-series forecast since it drives the training of forecast models. A widespread objective is mean squared error (MSE), measuring the point-wise divergence between $\mathbf{Y}$ and $\hat{\mathbf{Y}}$:
\begin{equation}\label{eq:mse}
\mathcal{E}_\mathrm{MSE}=\left\|\mathbf{Y}-\hat{\mathbf{Y}}\right\|^2_2=\sum_{t=1}^\mathrm{T}\left(\mathbf{y}_t-\hat{\mathbf{y}}_{t}\right)^2.
\end{equation}

While widespread in modern literature~\cite{FreTS,itransformer,OLinear}, $\mathcal{E}_\mathrm{MSE}$ has a critical limitation: it treats future observations as conditionally independent labels~\cite{lossshapeconstraint}, while time-series exhibit a label autocorrelation structure, where $\mathbf{y}_t$ is dependent on its predecessors $\mathbf{y}_{<t}$~\citep{DLinear}. $\mathcal{E}_\mathrm{MSE}$ disregards this label autocorrelation structure and thus being a biased objective. This phenomenon, termed as autocorrelation bias~\cite{wang2025nipstimeo1}, is delineated in Lemma \ref{lem:bias}.
\begin{lemma}[Autocorrelation bias] \label{lem:bias}
    Let $\mathbf{y}\in\mathbb{R}^\mathrm{T}$ be a univariate label sequence with conditional covariance $\mathbf{\Sigma}\in\mathbb{R}^{\mathrm{T}\times\mathrm{T}}$. $\mathcal{E}_\mathrm{MSE}$ in~\eqref{eq:mse} is biased against the likelihood of $\mathbf{y}$ unless $\mathbf{\Sigma}$ is diagonal, \ie different steps in $\mathbf{y}$ are conditionally decorrelated given $\mathbf{X}$.
\end{lemma}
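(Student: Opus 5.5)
We model the label sequence as conditionally Gaussian given the history, $\mathbf{y}\mid\mathbf{X}\sim\mathcal{N}(\boldsymbol{\mu},\mathbf{\Sigma})$, where the forecast $\hat{\mathbf{y}}=g(\mathbf{X})$ plays the role of the conditional mean $\boldsymbol{\mu}$ and $\mathbf{\Sigma}$ is positive definite. The proof is then a direct comparison of two objectives: the negative log-likelihood (NLL) of $\mathbf{y}$ and $\mathcal{E}_\mathrm{MSE}$ in~\eqref{eq:mse}. We must show that the two agree up to constants and a positive scale exactly when $\mathbf{\Sigma}$ is diagonal in the sense of the statement. Formally, ``biased'' will mean that $\mathcal{E}_\mathrm{MSE}$ is not equivalent to the NLL as a function of $\hat{\mathbf{y}}$, up to an additive constant and a positive multiplicative factor.

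\textbf{Step 1 (the likelihood).} Dropping terms that do not depend on $\hat{\mathbf{y}}$, the NLL is
\begin{equation*}
-\log p(\mathbf{y}\mid\mathbf{X}) = \tfrac12(\mathbf{y}-\hat{\mathbf{y}})^\top\mathbf{\Sigma}^{-1}(\mathbf{y}-\hat{\mathbf{y}}) + \tfrac12\log\det(2\pi\mathbf{\Sigma}).
\end{equation*}
The only part that depends on $\hat{\mathbf{y}}$ is the Mahalanobis form $\mathbf{e}^\top\mathbf{\Sigma}^{-1}\mathbf{e}$, where $\mathbf{e}=\mathbf{y}-\hat{\mathbf{y}}$. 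By contrast, $\mathcal{E}_\mathrm{MSE}=\mathbf{e}^\top\mathbf{I}\,\mathbf{e}$.

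\textbf{Step 2 (sufficiency of diagonality).} Suppose $\mathbf{\Sigma}$ is diagonal with entries $\sigma_t^2$. Then the NLL becomes $\sum_t e_t^2/(2\sigma_t^2)$, a weighted version of MSE with no cross terms. If the conditional variances are homoscedastic, $\sigma_t=\sigma$ for all $t$, this equals $\mathcal{E}_\mathrm{MSE}/(2\sigma^2)$. It therefore has the same minimizer and gradient directions as MSE, so MSE is unbiased. We will note that per-step scaling is a separate issue from autocorrelation. The ``unless'' clause only asserts that diagonality is necessary for unbiasedness, so we can adopt the equal-variance convention, or simply observe that a diagonal $\mathbf{\Sigma}$ removes all cross-step coupling.

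\textbf{Step 3 (necessity).} Suppose $\mathbf{\Sigma}$ is not diagonal. Then $\mathbf{\Sigma}^{-1}$ is not a scalar multiple of $\mathbf{I}$. If it were, say $\mathbf{\Sigma}^{-1}=c\mathbf{I}$, then $\mathbf{\Sigma}=c^{-1}\mathbf{I}$ would be diagonal. Two positive definite quadratic forms $\mathbf{e}^\top\mathbf{A}\mathbf{e}$ and $\mathbf{e}^\top\mathbf{B}\mathbf{e}$ coincide up to a positive scale for all $\mathbf{e}$ if and only if $\mathbf{A}=c\mathbf{B}$, because a quadratic form determines its symmetric matrix by polarization. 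Hence the NLL and MSE differ as functions of $\hat{\mathbf{y}}$.

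To make the bias concrete rather than merely formal, we will compare gradients. The NLL gradient is $-\mathbf{\Sigma}^{-1}\mathbf{e}$, while the MSE gradient is $-2\mathbf{e}$. Take $\mathbf{e}$ to be a standard basis vector $\mathbf{u}_i$ such that column $i$ of $\mathbf{\Sigma}^{-1}$ has a nonzero off-diagonal entry; such an $i$ exists whenever $\mathbf{\Sigma}^{-1}$ is not diagonal. Then the NLL gradient $-\mathbf{\Sigma}^{-1}\mathbf{u}_i$ has a nonzero component at some step $j\neq i$, whereas the MSE gradient $-2\mathbf{u}_i$ has none. So MSE ignores cross-step terms that the likelihood contains, and the two induce different training dynamics.

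\textbf{Main obstacle.} The main difficulty is pinning down what ``biased'' means. Both objectives are minimized at $\hat{\mathbf{y}}=\mathbb{E}[\mathbf{y}\mid\mathbf{X}]$ for an unrestricted model, so the bias must be stated as non-equivalence of objectives, or as a gradient or finite-sample discrepancy, not as a different population minimizer. Two further points need care. First, if $\mathbf{\Sigma}^{-1}$ were diagonal then $\mathbf{\Sigma}$ would be too, which justifies working with $\mathbf{\Sigma}^{-1}$ in Step 3. Second, the heteroscedastic diagonal case must be handled by the chosen convention in Step 2.
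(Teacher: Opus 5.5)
Your proposal is correct and takes essentially the same route as the paper, which proves the lemma only by deferring to Theorem 3.1 of Time-o1: that result writes out the Gaussian negative log-likelihood and identifies the bias as the gap $\|\mathbf{y}-\hat{\mathbf{y}}\|^2_{\mathbf{\Sigma}^{-1}}-\|\mathbf{y}-\hat{\mathbf{y}}\|_2^2$ between the Mahalanobis and Euclidean forms. Your polarization and gradient arguments for the necessity direction, and your explicit handling of the heteroscedastic diagonal case, are a more careful version of that same comparison.
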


To accommodate label autocorrelation, alternative learning objectives have been investigated. One prominent line of work focuses on aligning the latent components of $\mathbf{Y}$ and $\hat{\mathbf{Y}}$. Specifically, they employ a mapping function $\phi$ and adapt point-wise objectives to align $\phi(\hat{\mathbf{y}})$ with $\phi(\mathbf{Y})$. Exemplars include FreDF~\cite{wang2025iclrfredf} and Time-o1~\cite{wang2025nipstimeo1}, which instantiate $\phi$ as Fourier transform and principal component analysis, respectively. Theoretically, these objectives ensure unbiased training if the components of $\phi(\mathbf{y})$ are conditionally decorrelated (see Lemma~\ref{lem:bias}). However, both Fourier transform and principal component analysis ensure only \textit{marginally decorrelation} of the obtained components, not the required \textit{conditional decorrelation}. \textbf{Hence, these objectives fail to ensure unbiased training of forecast models.}

Another line of work aims to align the morphological shapes of $\mathbf{Y}$ and $\hat{\mathbf{Y}}$. The premise is that label autocorrelation is reflected in the shape of label sequence; thus, shape dissimilarity measures can serve as autocorrelation-aware objectives. Dynamic time wrapping~\cite{dtw} (DTW) is a standard shape dissimilarity measure for time-series but suffers from non-differentiability. To address this, SoftDTW \cite{soft-dtw} provides a differentiable relaxation, enabling end-to-end training of deep forecast models. Building on this, Dilate~\cite{Dilate} combines SoftDTW with a temporal distortion index to penalize lag mismatches, while STRIPE~\cite{STRIPE2} extends these concepts to probabilistic forecasting. \textbf{Nonetheless, these objectives remain predominantly heuristic} and lack rigorous theoretical guarantees for unbiasedness in the presence of label autocorrelation. \footnote{As a growing literature, there are other learning objectives for training deep forecast models~\cite{qiudbloss,tdalign,psloss,timesql,AST}. While they often improve upon $\mathcal{E}_\mathrm{MSE}$, they are not explicitly designed to accommodate label autocorrelation central to this paper. Thus, we list them for completeness but omit further discussion for brevity.}.

\subsection{Distribution balancing and Imbens' criterion}
Two distributions are termed \emph{balancing} if they are indistinguishable with respect to all distributional statistics. 
\citet{imbens2015causal} introduced a criterion that defines ideal balancing between to distributions $\mathbb{P}_1$ and $\mathbb{P}_2$ as:
\begin{equation}\label{eq:imbens}
    \forall \phi: \mathbb{E}_{z\in\mathbb{P}_1}\left[\phi(z)\right]=\mathbb{E}_{z\in\mathbb{P}_2}\left[\phi(z)\right],
\end{equation}
where $\phi$ is an arbitrary balancing function. This equality is required to hold for any choice of $\phi$, which ensures that $\mathbb{P}_1$ and $\mathbb{P}_2$ have equivalent first moment under any transformation. This criterion is widely used to assess distribution balancing in diverse fields, such as causal inference~\cite{cbps,cbgps} and trustworthy machine learning~\cite{li2023propensity}.



\section{Methodology}
\subsection{Motivation}\label{subsec:motivation}
The training of deep time-series forecast models can be interpreted as a distribution balancing problem~\cite{wang2026iclrdistdf}. If $\mathbb{P}(\mathbf{Y}|\mathbf{X})$ and $\mathbb{P}(\hat{\mathbf{Y}}|\mathbf{X})$ are balanced, the forecast model is considered accurately trained, as it correctly captures the data-generating process of the label sequence. Likelihood-based objectives like $\mathcal{E}_\mathrm{MSE}$ pursue this goal by imposing parametric assumptions; specifically, they assume $\mathbb{P}(\mathbf{Y}|\mathbf{X})$ follows a Gaussian distribution with diagonal covariance, which induces autocorrelation bias (see Lemma~\ref{lem:bias}). Subsequent objectives mitigate this by aligning latent components~\cite{wang2025iclrfredf,wang2025nipstimeo1,wang2026iclrqdf} or morphological shapes~\cite{soft-dtw,Dilate} of $\mathbf{Y}$ and $\hat{\mathbf{Y}}$; however, as discussed in Section \ref{sec:obj}, they are heuristic and lack unbiasedness guarantees.

These findings motivate objectives that enforce distribution balance directly, without resorting to parametric assumptions that introduce autocorrelation bias. Imbens' criterion formalizes the status of distribution balance in \eqref{eq:imbens}. Indeed, this criterion has been pursued by existing objectives for specific choices of $\phi$. For example, $\mathcal{E}_\mathrm{MSE}$ seeks to ensure \eqref{eq:imbens} given $\phi$ is an identity map, and other variants are summarized in Table~\ref{tab:phi}. However, Imbens' criterion requires \eqref{eq:imbens} to hold for \textbf{all} $\phi$, whereas current objectives enforce it for only \textbf{one or two specified} balancing functions. As a result, they do not generally satisfy Imbens' criterion and fail to achieve true distribution balance for training forecast models.

Given the potential of distribution balancing and the failure of existing objectives to achieve it, it is essential to investigate  objectives that directly target distribution balancing as defined by Imbens' criterion. Critically, three research questions warrant investigation: \textit{(1) How to formulate learning objective based on Imbens' criterion for training forecast models? (2) Do they come with theoretical guarantees? (3) Do they improve forecast performance in practice?}

\begin{table}
    \centering\scriptsize
    \caption{The choice of $\phi$ in recent learning objectives for deep time-series forecast~\cite{wang2025iclrfredf,wang2025nipstimeo1,wang2026iclrqdf,wang2026iclrdistdf} .}
    \label{tab:phi}
    \setlength\tabcolsep{9pt}
    \begin{tabular}{llll}
    \toprule
        Method & Year & Choice of $\phi$ & Number of $\phi$\\ \midrule
        FreDF & 2025 & Discrete Fourier transform. & 1 \\
        Time-o1 & 2025 & Principal component analysis. & 1 \\
        QDF & 2025 & Meta-learned linear mapping. & 1 \\
        DistDF & 2025 & Mean and variance statistics. & 2\\
    \bottomrule
    \end{tabular}
\end{table}

\subsection{Distribution balancing with specified $\phi$}
In this section, we develop a distribution balancing strategy to train forecast models. Notably, directly balancing the conditional distributions $\mathbb{P}(\mathbf{Y}|\mathbf{X})$ and $\mathbb{P}(\hat{\mathbf{Y}}|\mathbf{X})$ is intractable given finite-sample time-series datasets\footnote{Specifically, for any fixed $\mathbf{X}$, the dataset contains only one associated label sequence, and the model produces only one forecast sequence. Thus, only one sample can be observed from each of $\mathbb{P}(\mathbf{Y}|\mathbf{X})$ and $\mathbb{P}(\hat{\mathbf{Y}}|\mathbf{X})$, which is insufficient to characterize—and therefore balance—these conditional distributions.}. Inspired by \citet{wang2026iclrdistdf}, we instead balance the joint distributions $\mathbb{P}(\mathbf{Y},\mathbf{X})$ and $\mathbb{P}(\hat{\mathbf{Y}},\mathbf{X})$, which is effective for two reasons: (i) $\mathbb{P}(\mathbf{Y},\mathbf{X})=\mathbb{P}(\hat{\mathbf{Y}},\mathbf{X})$ implies $\mathbb{P}(\mathbf{Y}|\mathbf{X})=\mathbb{P}(\hat{\mathbf{Y}}|\mathbf{X})$ for all $\mathbf{X}$, thus ensuring effective training of forecast models; (ii) joint balancing is tractable since all samples in the dataset can be used to estimate and balance the joint distributions. 

Let $\mathbf{Z}=(\mathbf{Y},\mathbf{X})$ and $\hat{\mathbf{Z}}=(\hat{\mathbf{Y}},\mathbf{X})$ denote the joint random variables. According to Imbens' criterion \eqref{eq:imbens}, the two joint distributions are balanced if we have
\begin{equation}\label{eq:imbensjoint}
    \forall \phi: \mathbb{E}_{\mathbf{Z}\in\mathbb{P}(\mathbf{Z})}\left[\phi(\mathbf{Z})\right]=\mathbb{E}_{\hat{\mathbf{Z}}\in\mathbb{P}(\hat{\mathbf{Z}})}\left[\phi(\mathbf{Z})\right],
\end{equation}

Given a finite time-series samples $\mathbf{Z}^{(n)}=(\mathbf{X}^{(n)},\mathbf{Y}^{(n)})$ and forecasts $\hat{\mathbf{Z}}^{(n)}=(\mathbf{X}^{(n)},\hat{\mathbf{Y}}^{(n)})$ with $n=1,\dots,\mathrm{N}$ and $\hat{\mathbf{Y}}^{(n)}=g_\theta(\mathbf{X}^{(n)})$;  one can attempt to enforce \eqref{eq:imbensjoint} over a chosen set of balancing functions $\{\phi_k\}_{k=1}^\mathrm{K}$. This yields the constrained learning problem as follow:
\begin{equation}\label{eq:constrained_balance}
\begin{aligned}
    \min_{\theta} &\sum_{n=1}^\mathrm{N} \left\|\mathbf{Y}^{(n)}-\hat{\mathbf{Y}}^{(n)}\right\|_2^2\\
    \text { s.t. } 
    & \sum_{n=1}^\mathrm{N}\phi_k(\mathbf{Z}^{(n)})=\sum_{n=1}^\mathrm{N}\phi_k(\mathbf{\hat{Z}}^{(n)}), \forall k \in [\mathrm{K}] \\
\end{aligned}
\end{equation}
where each constraint enforces balance with respect to certain balancing function $\phi_k$. In contrast to the objectives in Table~\ref{tab:phi}, the formulation in \eqref{eq:constrained_balance} increases the number of balancing functions to $\mathrm{K}$. As $\mathrm{K}$ grows, the constraints in \eqref{eq:constrained_balance} increasingly guarantee the Imbens' criterion, thereby promoting distribution balance between $\mathbb{P}(\mathbf{Z})$ and $\mathbb{P}(\hat{\mathbf{Z}})$.

However, the formulation \eqref{eq:constrained_balance} relies on a predefined set of balancing functions $\{\phi\}_\mathrm{k=1}^\mathrm{K}$. In practice, selecting a finite set of balancing functions that sufficiently depicts distribution discrepancies is difficult. To guarantee \eqref{eq:imbensjoint}, one would need to specify infinitely many balancing constraints, rendering \eqref{eq:constrained_balance} computationally intractable. Therefore, it is essential to develop tractable relaxations that selectively enforce balance over the most informative balancing functions.

\subsection{Kernel functions, universal property, and kernelized soft-margin balancing}

To overcome the reliance on a predefined finite set of balancing functions, we leverage reproducing kernel Hilbert space (RKHS) theory to construct a rich, data-adaptive set of balancing functions and derive a tractable learning problem. We start by clarifying key properties of kernel functions in Definition \ref{def:kernel} and \ref{def:univ}.

\begin{definition}[Kernel function]\label{def:kernel}
Let $\mathcal{Z}$ be a non-empty set. A function $K: \mathcal{Z} \times \mathcal{Z} \rightarrow \mathbb{R}$ is a kernel function if there exists a Hilbert space $\mathcal{H}$ and a feature map $\psi: \mathcal{Z} \rightarrow \mathcal{H}$ such that $\forall \ \mathbf{Z}, \mathbf{Z}^{\prime} \in \mathcal{Z}$,
$K(\mathbf{Z}, \mathbf{Z}^{\prime}):=\left\langle\psi(\mathbf{Z}), \psi(\mathbf{Z}^{\prime})\right\rangle_{\mathcal{H}}$.
\end{definition}

\begin{definition}[Universal kernel]\label{def:univ}
    For $\mathcal{Z}$ a compact Hausdorff space, a universal kernel ensures that any continuous function  $e: \mathcal{Z} \to \mathbb{R}$  can be approximated arbitrarily well within its RKHS. Specifically, for any  $\epsilon > 0$, there exists  $f \in \mathcal{H}$  such that: $\sup _{\mathbf{Z} \in \mathcal{Z}}|f(\mathbf{Z})-e(\mathbf{Z})| \leq \epsilon$.
\end{definition}

The exponential kernel is a kernel function defined as:
$$K(\mathbf{Z}, \mathbf{Z}^{\prime})=\exp (-\left\|\mathbf{Z}- \mathbf{Z}^{\prime}\right\|/2 \sigma^2),$$
which is a universal kernel defined in Definition \ref{def:univ}. It implies that functions in the RKHS of $K$, defined as $\mathcal{H} = \operatorname{span} \{K(\cdot, \mathbf{Z}) \mid \mathbf{Z} \in \mathcal{Z} \}$, can approximate any continuous balancing function. Motivated by this property, it is natural to instantiate the balancing functions as exponential kernel functions anchored by $\{\mathbf{Z}^{(k)}\}_{k=1}^\mathrm{K}$:
\begin{equation*}
    \phi_k(\cdot)=K(\cdot, \mathbf{Z}^{(k)}),\quad \mathrm{i.e.,} \ \forall \mathbf{Z}, \ \phi_k(\mathbf{Z})=K(\mathbf{Z}, \mathbf{Z}^{(k)}),
\end{equation*}
which yields $\mathrm{K}$ balancing functions.

On this basis, substituting $\phi_k(\cdot)=K(\cdot, \mathbf{Z}^{(k)})$ into \eqref{eq:constrained_balance} yields the following kernelized constraints that are directly computable from the data:
\begin{equation}\label{eq:constrained_balance_kernel}
\begin{aligned}
    \min_{\theta} &\sum_{n=1}^\mathrm{N} \left\|\mathbf{Y}^{(n)}-\hat{\mathbf{Y}}^{(n)}\right\|_2^2\\
    \text { s.t. } 
    & \sum_{n=1}^\mathrm{N}K(\mathbf{Z}^{(n)},\mathbf{Z}^{(k)})=\sum_{n=1}^\mathrm{N}K(\hat{\mathbf{Z}}^{(n)},\hat{\mathbf{Z}}^{(k)}),\\
\end{aligned}
\end{equation}
where $k=1,2,...,\mathrm{K}$. One could directly set $\mathrm{K}=\mathrm{N}$, i.e., using balancing functions anchored at all samples. However, this introduces $\mathrm{N}$ constraints, which is computationally prohibitive and can render the problem over-constrained for large time-series datasets.
To obtain a tractable formulation, two refinements to \eqref{eq:constrained_balance_kernel} are introduced as follows.
\begin{itemize}[leftmargin=*]
    \item First, we introduce a selective balancing mechanism. Recognizing that different balancing functions have different abilities to detect imbalance, we quantify the informativeness score of the $k$-th balancing function as:
    \begin{equation}\label{eq:delta}
        \begin{aligned}\small
        \delta_k &=\sum_{n=1}^\mathrm{N}K(\mathbf{Z}^{(n)},\mathbf{Z}^{(k)})-\sum_{n=1}^\mathrm{N}K(\hat{\mathbf{Z}}^{(n)},\mathbf{Z}^{(k)}),
        \end{aligned}
    \end{equation}
    where $k=1,2,...,\mathrm{N}$; larger $|\delta_k|$ signifies greater detected imbalance. We select only $\mathrm{K}$ ($\mathrm{K}<\mathrm{N}$) balancing functions with the largest $|\delta_k|$ values, and denote their associated anchor samples as $\mathbf{Z}^{(n_1)},\mathbf{Z}^{(n_2)},...,\mathbf{Z}^{(n_\mathrm{K})}$. By prioritizing the most informative balancing functions, we reduce the number of constraints while maintaining effective distribution balance. This strategy is theoretically grounded in Theorem~\ref{thm:kernel_align}, which guarantees distribution alignment between $\mathbb{P}(\mathbf{Z})$ and $\mathbb{P}(\hat{\mathbf{Z}})$ provided that the optimal discrimination function lies within the linear span of the selected balancing functions $\{K(\cdot,\mathbf{Z}^{(n_k)})\}_{k=1}^\mathrm{K}$.
    \item Second, we introduce a soft-margin relaxation mechanism. Since the equality constraints in \eqref{eq:constrained_balance_kernel} might be overly rigid—particularly for time-series data, where outliers or abrupt shocks may induce large but spurious imbalances—we replace each equality constraint with a pair of inequality constraints that allow bounded violations, and penalize these violations in the objective.
\end{itemize} 
\begin{theorem}
    \label{thm:kernel_align}
    Suppose $\mathcal{H}$ is the RKHS of a exponential kernel $K$, $f^* \in \mathcal{H}$ is the discrimination function that maximizes the discrepancy between two distributions $\mathbb{P}$ and $\mathbb{Q}$. If $f^*$ lies within the linear span of a finite kernel function set: $f^*\in \{K(\cdot, \mathbf{Z}_k)\}_{k=1}^\mathrm{K}$; then $\mathbb{P}$ and $\mathbb{Q}$ are balanced if $\forall k \in [\mathrm{K}]:$ $\mathbb{E}_{\mathbf{Z}\in\mathbb{P}}[K(\mathbf{Z}, \mathbf{Z}_k)] = \mathbb{E}_{\hat{\mathbf{Z}}\in\mathbb{Q}}[K(\hat{\mathbf{Z}}, \mathbf{Z}_k)]$.
\end{theorem}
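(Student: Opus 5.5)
The plan is to read the statement through the maximum mean discrepancy (MMD). The "discrimination function that maximizes the discrepancy" is the witness function of the MMD over the unit ball of $\mathcal{H}$:
\[
\mathrm{MMD}(\mathbb{P},\mathbb{Q})=\sup_{\|f\|_{\mathcal{H}}\le 1}\Big(\mathbb{E}_{\mathbb{P}}[f(\mathbf{Z})]-\mathbb{E}_{\mathbb{Q}}[f(\hat{\mathbf{Z}})]\Big).
\]
I will also read the hypothesis $f^*\in\{K(\cdot,\mathbf{Z}_k)\}_{k=1}^\mathrm{K}$ as $f^*\in\operatorname{span}\{K(\cdot,\mathbf{Z}_k)\}_{k=1}^\mathrm{K}$, as the surrounding text indicates. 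The argument then has three steps: show $\mathrm{MMD}=0$, deduce $\mathbb{P}=\mathbb{Q}$ from universality, and conclude that Imbens' criterion \eqref{eq:imbens} holds.

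\textbf{Step 1 (the supremum vanishes).} I write $f^*=\sum_{k=1}^\mathrm{K}\alpha_k K(\cdot,\mathbf{Z}_k)$ with real coefficients $\alpha_k$. Since the kernel is bounded, expectations are linear in $f$, so
\[
\mathbb{E}_{\mathbb{P}}[f^*]-\mathbb{E}_{\mathbb{Q}}[f^*]=\sum_{k=1}^\mathrm{K}\alpha_k\Big(\mathbb{E}_{\mathbb{P}}[K(\mathbf{Z},\mathbf{Z}_k)]-\mathbb{E}_{\mathbb{Q}}[K(\hat{\mathbf{Z}},\mathbf{Z}_k)]\Big).
\]
Each bracket is zero by the hypothesised moment equalities, so the left side is zero. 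Because $f^*$ attains the supremum, $\mathrm{MMD}(\mathbb{P},\mathbb{Q})=0$.

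\textbf{Step 2 (from zero MMD to equality of distributions).} For every $f$ in the unit ball, $-f$ is also in the unit ball. Hence $\mathrm{MMD}=0$ gives $\mathbb{E}_{\mathbb{P}}[f]=\mathbb{E}_{\mathbb{Q}}[f]$ for every $f\in\mathcal{H}$, after rescaling. Now take any continuous $e$ on the compact domain $\mathcal{Z}$ and any $\epsilon>0$. By Definition~\ref{def:univ}, the exponential kernel is universal, so there is $f\in\mathcal{H}$ with $\sup_{\mathbf{Z}}|f-e|\le\epsilon$. Then
\[
\big|\mathbb{E}_{\mathbb{P}}[e]-\mathbb{E}_{\mathbb{Q}}[e]\big|\le 2\epsilon+\big|\mathbb{E}_{\mathbb{P}}[f]-\mathbb{E}_{\mathbb{Q}}[f]\big|=2\epsilon.
\]
Letting $\epsilon\to 0$ shows the two expectations agree for all continuous $e$. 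The Riesz representation theorem, or uniqueness of Borel measures on compact metric spaces, then gives $\mathbb{P}=\mathbb{Q}$.

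\textbf{Step 3 (Imbens' criterion).} Equality of the measures immediately gives \eqref{eq:imbens} for every integrable $\phi$, so $\mathbb{P}$ and $\mathbb{Q}$ are balanced.

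\textbf{Main obstacle.} The difficulty is in Step 1, namely making "$f^*$ maximizes the discrepancy" precise. If the maximizer is taken over the unit ball, the witness is the normalized mean-embedding difference $\mu_\mathbb{P}-\mu_\mathbb{Q}$, and everything goes through. If instead the maximizer is taken over all of $\mathcal{H}$, the supremum may be infinite, and this case must be excluded. There is also a degenerate case: $f^*=0$ can be a maximizer only when the MMD is already zero, which is consistent with the conclusion. I will therefore state the unit-ball convention explicitly and note that the compactness of $\mathcal{Z}$ assumed in Definition~\ref{def:univ} is also needed for Step 2.
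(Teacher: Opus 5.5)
Your proposal is correct and follows essentially the same route as the paper's proof. You take $f^*$ as the MMD witness over the unit ball, use linearity of expectation on $f^*=\sum_k\alpha_k K(\cdot,\mathbf{Z}_k)$ to conclude $\mathrm{MMD}(\mathbb{P},\mathbb{Q})=0$, and invoke universality to get $\mathbb{P}=\mathbb{Q}$. You differ only in proving that last implication directly (uniform approximation of continuous functions plus Riesz representation), where the paper cites the kernel two-sample result of Gretton et al., and in omitting the paper's unnecessary appeal to the representer theorem, since the span hypothesis already gives the expansion.
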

\begin{proof}
    The proof can be found in Appendix \ref{sec:theoretical_justification}.
\end{proof}

The two efforts collectively yield the kernelized soft-margin balancing problem:
\begin{equation}\label{eq:constrained_balance_kernel2}
{\begin{aligned}
    \min_{\theta} &\sum_{n=1}^\mathrm{N} \left\|\mathbf{Y}^{(n)}-\hat{\mathbf{Y}}^{(n)}\right\|_2^2+\kappa\sum_{k=1}^\mathrm{K}\varepsilon_k, \;\; \varepsilon_k\geq 0, \;\; \forall k \in [\mathrm{K}]\\
    \text { s.t. } 
    & \sum_{n=1}^\mathrm{N}K(\mathbf{Z}^{(n)},\mathbf{Z}^{(n_\mathrm{k})})-\sum_{n=1}^\mathrm{N}K(\mathbf{Z}^{(n)},\hat{\mathbf{Z}}^{(n_\mathrm{k})})\leq \mathrm{C}+\varepsilon_k,\\
    & \sum_{n=1}^\mathrm{N}K(\mathbf{Z}^{(n)},\mathbf{Z}^{(n_\mathrm{k})})-\sum_{n=1}^\mathrm{N}K(\mathbf{Z}^{(n)},\hat{\mathbf{Z}}^{(n_\mathrm{k})})\geq -\mathrm{C}-\varepsilon_k,\\
\end{aligned}}
\end{equation}
where $C\geq 0$ controls the tolerated imbalance,  $\kappa>0$ penalizes violations, and $\varepsilon_k$ are slack variables. This formulation retains the distribution balancing rationale following Imbens' criterion while becoming computationally tractable for large-scale time-series data.

\begin{algorithm}[t]
\caption{The workflow of KMB-DF.}
\label{algo:kmbdf}
\footnotesize
\textbf{Input}: $\mathbf{X}^{(n)}$: history sequences, $\mathbf{Y}^{(n)}$: label sequences. \\
\textbf{Parameter}: 
$\alpha$: the strength of penalty, $\mathrm{K}$: the number of balancing functions, $\mathrm{C}$: the tolerated imbalance, $g_\theta$: the forecast model with parameters $\theta$. \\
\textbf{Output}: $\mathcal{L}_\mathrm{KMB-DF}$: the obtained learning objective. \\
\begin{algorithmic}[1] 
\STATE $\hat{\mathbf{Y}}^{(n)}\leftarrow g_\theta(\mathbf{X}^{(\mathrm{n})})$, $n=1,...,\mathrm{N}$
\STATE $\mathbf{Z}^{(\mathrm{n})}=\mathrm{concat}(\mathbf{X}^{(\mathrm{n})},\mathbf{Y}^{(\mathrm{n})})$, $n=1,...,\mathrm{N}$
\STATE$\hat{\mathbf{Z}}^{(\mathrm{n})}=\mathrm{concat}(\mathbf{X}^{(\mathrm{n})},\hat{\mathbf{Y}}^{(\mathrm{n})})$, $n=1,...,\mathrm{N}$
\STATE $\delta_k=\sum_{n=1}^\mathrm{N}K(\mathbf{Z}^{(n)},\mathbf{Z}^{(k)})-K(\mathbf{Z}^{(n)},\hat{\mathbf{Z}}^{(k)})$, $k=1,...,\mathrm{N}$
\STATE $n_1,...,n_\mathrm{K}\leftarrow \arg\mathrm{TopK}_{n}\delta_n$
\STATE $\xi_{n_k} = [-\mathrm{C}-\delta_{n_k}]_++[\delta_{n_k}+\mathrm{C}]_+$, $k=1,...,\mathrm{K}$
\STATE $\mathcal{E}_\mathrm{KMB-DF}\leftarrow\alpha\sum\limits_{k=1}^\mathrm{K} \xi_{n_k}+ \left(1-\alpha\right)\sum\limits_{n=1}^\mathrm{N} \left\|\mathbf{Y}^{(n)}-\hat{\mathbf{Y}}^{(n)}\right\|_2^2$
\end{algorithmic}
\end{algorithm}

\subsection{Model implementation}

In this section, we introduce the implementation of KMB-DF, which adapts the kernelized soft-margin balancing problem to train a deep forecast model $g_\theta$. The main procedure is summarized in Algorithm~\ref{algo:kmbdf}.

Given a set of $\mathrm{N}$ samples, each consisting of a history sequence $\mathbf{X}^{(\mathrm{n})}$ and its corresponding label sequence $\mathbf{Y}^{(\mathrm{n})}$, the forecast model generates the forecast sequence (step 1). To perform joint distribution balancing, we construct the extended sequences, where $\mathrm{concat}(\cdot,\cdot)$ denotes concatenation along the time dimension (steps 2-3). Then, we compute informativeness scores for all balancing functions and retain the top $\mathrm{K}$ functions with the largest $|\delta_k|$ values (steps 4-5). The final learning objective is defined as follows:
\begin{equation}\label{eq:obj}
\begin{aligned}
    \mathcal{E}_\mathrm{KMB-DF} &=\alpha\sum_{k=1}^\mathrm{K} \xi_{n_k}+ \left(1-\alpha\right)\sum_{n=1}^\mathrm{N} \left\|\mathbf{Y}^{(n)}-\hat{\mathbf{Y}}^{(n)}\right\|_2^2\\
    \mathrm{where} \quad \xi_{n_k} &= [-\mathrm{C}-\delta_{n_k}]_++[\delta_{n_k}+\mathrm{C}]_+,\\
    \quad \delta_{n_k}&=\sum_{n=1}^\mathrm{N}K(\mathbf{Z}^{(n)},\mathbf{Z}^{(n_k)})-\sum_{n=1}^\mathrm{N}K(\mathbf{Z}^{(n)},\hat{\mathbf{Z}}^{(n_k)}), \\
\end{aligned}
\end{equation}
where $[\cdot]_+=\max(\cdot,0)$, $0\leq\alpha\leq1$ controls the strength of penalty for violating balancing constraints, $\mathrm{C}$ controls the margin within which imbalance is tolerated. This unconstrained objective is directly obtained from \eqref{eq:constrained_balance_kernel2} by introducing the hinge penalties $\xi_k$, which enables optimization with gradient-based methods (step 7).

By iteratively minimizing $\mathcal{E}_\mathrm{KMB-DF}$, KMB-DF improves forecast performance by effectively balancing joint distributions while preserving the benefits of the canonical DF framework~\citep{DLinear}, such as efficient multi-task training and inference. Moreover, as a model-agnostic learning objective, KMB-DF can be integrated into various forecast models to improve performance.

\section{Experiments}
To demonstrate the utility of KMB-DF to train forecast models, there are five aspects deserving empirical investigation:
\begin{enumerate}[leftmargin=*]
    \item \textbf{Performance:} \textit{Does KMB-DF perform well?}  In Section~\ref{sec:overall}, we compare KMB-DF with previously developed learning objectives for training forecast models.
    \item \textbf{Gain:} \textit{Why does it work?} In section \ref{sec:ablation}, we perform an ablative study, dissecting the individual components and clarifying their contributions to forecast accuracy.
    \item \textbf{Generality:} \textit{Does it support other models and kernels?} In Section \ref{sec:kernel_studies}, we analyze the impact of kernel selection, and in Section~\ref{sec:generalize}, we examine its compatibility with various models, with further results in Appendix \ref{sec:generalize_app}.
    \item \textbf{Sensitivity:} \textit{Is it sensitive to hyperparameters?} In Section \ref{sec:hyper}, we analyze the sensitivity of KMB-DF to the hyperparameter $\alpha$, $\mathrm{K}$ and $\mathrm{C}$, showing stable performance across a broad parameter range.
    \item \textbf{Efficiency:} \textit{Is it computationally efficient?} In Appendix~\ref{sec:comp_app}, we evaluate the running cost of KMB-DF across different scenarios.
\end{enumerate}

\subsection{Setup}
\paragraph{Datasets.} 
We evaluate our methods using several standard public benchmarks for long-term time-series forecasting, following~\citet{Timesnet}. Specifically, we use the ETT (four subsets), ECL, and Weather~\citep{itransformer} datasets. Additionally, we introduce M5 dataset~\citep{M5}, which is notably challenging. All datasets are split chronologically into training, validation, and test sets. Comprehensive dataset statistics are presented in Appendix~\ref{sec:dataset}.
\paragraph{Baselines.} Since this paper focuses on the devise of learning objectives, we select competitive learning objectives tailored for training forecast models as baselines: \ding{182} \textbf{shape-alignment objectives:} GDTW~\citep{GDTW}, Dilate~\citep{Dilate}, and Soft-DTW~\citep{soft-dtw}; \ding{183} \textbf{likelihood maximation objectives:} QDF~\citep{wang2026iclrqdf}, Time-o1~\citep{wang2025nipstimeo1}, Koopman~\cite{koopman}, FreDF~\citep{wang2025iclrfredf}, and MSE; \ding{184} \textbf{distribution balancing objectives:} DistDF~\citep{wang2026iclrdistdf}. The implementation of baselines follows the official codebase from~\citet{wang2026iclrdistdf}.

\paragraph{Implementation.}  We employ CFPT~\citep{CFPT} as the default forecast model in comparing learning objectives. To ensure fair comparison, the drop-last trick is disabled for all baselines, as recommended in~\citet{qiutfb}. All objectives are trained with the Adam optimizer~\citep{Adam}. When integrating KMB-DF to train a forecast model, we retain all hyperparameters from the public benchmarks~\citep{CFPT}, only tuning $\alpha$, the learning rate, the tolerated imbalance $\mathrm{C}$ and the number of balancing functions $\mathrm{K}$. Experiments are run on Intel(R) Xeon(R) Platinum 8463B CPUs with 32 NVIDIA RTX H800 GPUs. Further implementation details are provided in Appendix~\ref{sec:reproduce}.

\begin{table*}
\centering
\begin{threeparttable}
\caption{Comparative results with other objectives for time-series forecasting.}\label{tab:loss_avg}
\vspace{-5pt}
\renewcommand{\arraystretch}{0.8}
\setlength{\tabcolsep}{4pt}
\scriptsize
\renewcommand{\multirowsetup}{\centering}
\begin{tabular}{c|c|cc|cc|cc|cc|cc|cc|cc|cc|cc|cc}
    \toprule
    \multicolumn{2}{l}{\rotatebox{0}{\scaleb{Loss}}} & 
    \multicolumn{2}{c}{\rotatebox{0}{\scaleb{\textbf{KMB-DF}}}} &
    \multicolumn{2}{c}{\rotatebox{0}{\scaleb{QDF}}} &
    \multicolumn{2}{c}{\rotatebox{0}{\scaleb{DistDF}}} &
    \multicolumn{2}{c}{\rotatebox{0}{\scaleb{Time-o1}}} &
    \multicolumn{2}{c}{\rotatebox{0}{\scaleb{FreDF}}} &
    \multicolumn{2}{c}{\rotatebox{0}{\scaleb{Koopman}}} &
    \multicolumn{2}{c}{\rotatebox{0}{\scaleb{GDTW}}} &
    \multicolumn{2}{c}{\rotatebox{0}{\scaleb{Dilate}}} &
    \multicolumn{2}{c}{\rotatebox{0}{\scaleb{Soft-DTW}}} &
    \multicolumn{2}{c}{\rotatebox{0}{\scaleb{MSE}}} \\
    \cmidrule(lr){3-4} \cmidrule(lr){5-6}\cmidrule(lr){7-8} \cmidrule(lr){9-10}\cmidrule(lr){11-12} \cmidrule(lr){13-14} \cmidrule(lr){15-16} \cmidrule(lr){17-18} \cmidrule(lr){19-20} \cmidrule(lr){21-22} 
    \multicolumn{2}{l}{\rotatebox{0}{\scaleb{Metrics}}}  & \scalea{MSE} & \scalea{MAE}  & \scalea{MSE} & \scalea{MAE}  & \scalea{MSE} & \scalea{MAE}  & \scalea{MSE} & \scalea{MAE}  & \scalea{MSE} & \scalea{MAE}  & \scalea{MSE} & \scalea{MAE} & \scalea{MSE} & \scalea{MAE} & \scalea{MSE} & \scalea{MAE} & \scalea{MSE} & \scalea{MAE} & \scalea{MSE} & \scalea{MAE} \\
    \midrule

\multicolumn{2}{l}{\scalea{ETTm1}}
& \scalea{\bst{0.372}} & \scalea{\bst{0.391}}& \scalea{\subbst{0.375}} & \scalea{0.393}& \scalea{0.375} & \scalea{0.393}& \scalea{0.378} & \scalea{0.393}& \scalea{0.376} & \scalea{\subbst{0.392}}& \scalea{0.378} & \scalea{0.395}& \scalea{0.398} & \scalea{0.412}& \scalea{0.379} & \scalea{0.396}& \scalea{0.394} & \scalea{0.403}& \scalea{0.378} & \scalea{0.394} \\
\midrule
\multicolumn{2}{l}{\scalea{ETTm2}}
& \scalea{\bst{0.267}} & \scalea{0.315}& \scalea{\subbst{0.268}} & \scalea{0.315}& \scalea{0.269} & \scalea{0.315}& \scalea{0.271} & \scalea{\subbst{0.314}}& \scalea{0.271} & \scalea{\bst{0.313}}& \scalea{0.274} & \scalea{0.319}& \scalea{0.289} & \scalea{0.332}& \scalea{0.273} & \scalea{0.317}& \scalea{0.291} & \scalea{0.330}& \scalea{0.271} & \scalea{0.317} \\
\midrule
\multicolumn{2}{l}{\scalea{ETTh1}}
& \scalea{\bst{0.426}} & \scalea{\bst{0.426}}& \scalea{0.434} & \scalea{0.429}& \scalea{0.434} & \scalea{0.428}& \scalea{\subbst{0.430}} & \scalea{0.429}& \scalea{0.434} & \scalea{\subbst{0.428}}& \scalea{0.437} & \scalea{0.431}& \scalea{0.452} & \scalea{0.444}& \scalea{0.441} & \scalea{0.434}& \scalea{0.461} & \scalea{0.445}& \scalea{0.436} & \scalea{0.430} \\
\midrule
\multicolumn{2}{l}{\scalea{ETTh2}}
& \scalea{\bst{0.364}} & \scalea{\bst{0.394}}& \scalea{0.367} & \scalea{0.396}& \scalea{\subbst{0.365}} & \scalea{0.395}& \scalea{0.367} & \scalea{\subbst{0.394}}& \scalea{0.368} & \scalea{0.394}& \scalea{0.368} & \scalea{0.397}& \scalea{0.386} & \scalea{0.409}& \scalea{0.371} & \scalea{0.398}& \scalea{0.393} & \scalea{0.409}& \scalea{0.372} & \scalea{0.398} \\
\midrule
\multicolumn{2}{l}{\scalea{ECL}}
& \scalea{\bst{0.163}} & \scalea{\subbst{0.258}}& \scalea{\subbst{0.164}} & \scalea{0.259}& \scalea{0.164} & \scalea{0.259}& \scalea{0.165} & \scalea{0.259}& \scalea{0.166} & \scalea{\bst{0.258}}& \scalea{0.165} & \scalea{0.260}& \scalea{0.861} & \scalea{0.755}& \scalea{0.165} & \scalea{0.260}& \scalea{0.165} & \scalea{0.261}& \scalea{0.165} & \scalea{0.260} \\
\midrule
\multicolumn{2}{l}{\scalea{Weather}}
& \scalea{\bst{0.239}} & \scalea{\bst{0.265}}& \scalea{0.241} & \scalea{0.267}& \scalea{0.241} & \scalea{0.268}& \scalea{0.240} & \scalea{0.266}& \scalea{\subbst{0.240}} & \scalea{\subbst{0.265}}& \scalea{0.246} & \scalea{0.272}& \scalea{0.248} & \scalea{0.276}& \scalea{0.246} & \scalea{0.273}& \scalea{0.262} & \scalea{0.281}& \scalea{0.241} & \scalea{0.267} \\
\midrule
\multicolumn{2}{l}{\scalea{M5}}
& \scalea{\bst{0.152}} & \scalea{\bst{0.309}}& \scalea{0.156} & \scalea{\subbst{0.313}}& \scalea{\subbst{0.156}} & \scalea{0.313}& \scalea{0.161} & \scalea{0.319}& \scalea{0.168} & \scalea{0.327}& \scalea{0.158} & \scalea{0.317}& \scalea{0.160} & \scalea{0.318}& \scalea{0.159} & \scalea{0.317}& \scalea{0.162} & \scalea{0.322}& \scalea{0.168} & \scalea{0.327} \\
    \bottomrule
\end{tabular}
\begin{tablenotes}
    \item  \scriptsize \textit{Note}:  We fix the input length as 96 following~\citep{itransformer}. \bst{Bold} and \subbst{underlined} denote best and second-best results, respectively. \emph{Avg} indicates average results over forecast horizons: T=96, 192, 336 and 720. TransDF employs the top-performing baseline on each dataset as its underlying forecasting model.
\end{tablenotes}
\end{threeparttable}
\end{table*}

\subsection{Overall performance}\label{sec:overall}
Table~\ref{tab:loss_avg} presents a comprehensive comparison between KMB-DF and competitive learning objectives. We have the following observations: 
\ding{182} \textbf{Shape-alignment objectives suffer from heuristic definitions.} Methods like GDTW, Dilate, and Soft-DTW rely on geometric matching lacking statistical guarantees. Consequently, they yield high variance in performance, particularly failing on complex datasets like ECL where statistical properties dominate geometric shape.
\ding{183} \textbf{Likelihood objectives are limited by autocorrelation bias.} While QDF, Time-o1, and FreDF transform labels to mitigate autocorrelation bias, they achieve only marginal decorrelation. As noted in Section~\ref{subsec:motivation}, they fail to ensure the conditional independence required for unbiased estimation, resulting in limited gains over MSE.
\ding{184} \textbf{Finite moment balancing fails to satisfy Imbens' criterion.} DistDF aligns distributions using only a few predefined statistics. By neglecting the requirement for moment equivalence across any balancing function, it leaves higher-order distributional discrepancies unaddressed, limiting its effectiveness.
\ding{185} \textbf{KMB-DF achieves state-of-the-art accuracy through sufficient distribution balancing.} By leveraging the RKHS of universal kernels, KMB-DF effectively enforces Imbens' criterion across an infinite-dimensional space. This allows it to adaptively minimize complex distributional discrepancies, consistently yielding the lowest prediction errors.

\paragraph{Showcases.}
Figure~\ref{fig:case} visualizes forecast snapshots on ETTm1 and ETTh2 datasets with a lookback window of $\mathrm{H}=96$. 
In the ETTm1 case, the baseline fails to capture the sharp drop at step $n \approx 100$, exhibiting a tendency to revert to the mean. KMB-DF accurately tracks this large-scale shift by enforcing sufficient distribution balance in the RKHS, effectively mitigating the autocorrelation bias that limits point-wise objectives.
In the ETTh2 case, the baseline over-smooths high-frequency fluctuations, dampening peaks and troughs. KMB-DF preserves these fine-grained dynamics, as the universal kernels capture the higher-order moments necessary to model complex, non-smooth patterns.

\begin{figure}
\begin{center}
\subfigure[ETTm1 snapshot.]{\includegraphics[width=0.48\linewidth]{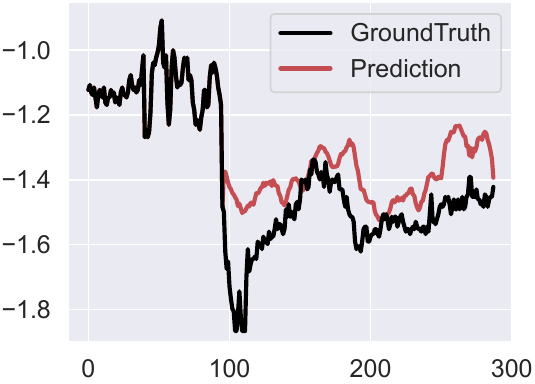}
\includegraphics[width=0.48\linewidth]{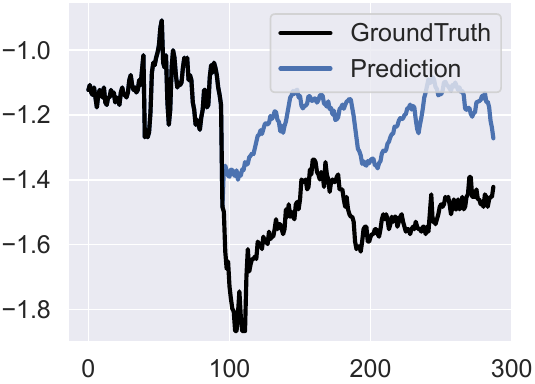}
}

\subfigure[ETTh2 snapshot.]{\includegraphics[width=0.48\linewidth]{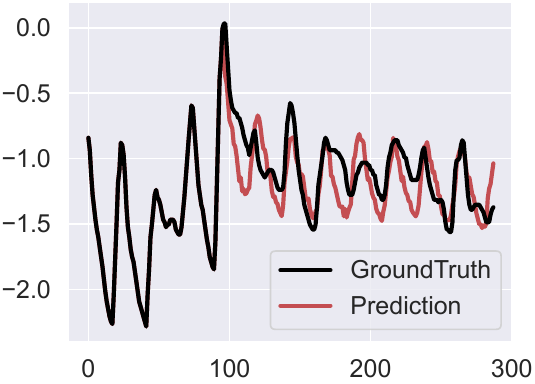}
\includegraphics[width=0.48\linewidth]{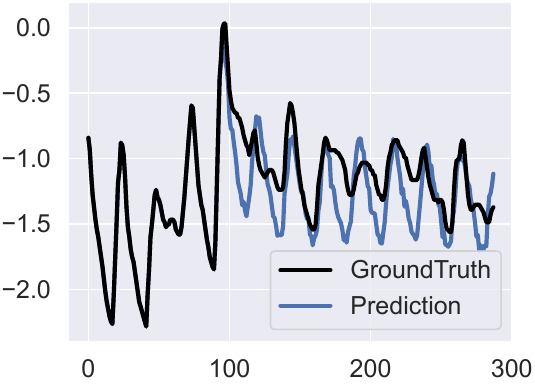}
}
\caption{The forecast sequence of MSE (in blue) and KMB-DF (in red), with historical length $\mathrm{H}=96$.}\label{fig:case}
\end{center}
\vspace{-10pt}
\end{figure}

\begin{table*}[t]
\caption{Ablation study results.}\label{tab:system_ablation_app}
\setlength{\tabcolsep}{5.4pt}
\vspace{-5pt}
\renewcommand{\arraystretch}{0.8}
\scriptsize
\centering
\begin{threeparttable}
\begin{tabular}{lcclccccccccccccccc}
    \toprule
    \multirow{2}{*}{Model} & \multirow{2}{*}{Soft-Margin} & \multirow{2}{*}{Selective} &\multirow{2}{*}{Data} && \multicolumn{2}{c}{T=96} && \multicolumn{2}{c}{T=192} && \multicolumn{2}{c}{T=336} && \multicolumn{2}{c}{T=720} && \multicolumn{2}{c}{Avg} \\
    \cmidrule{6-7} \cmidrule{9-10} \cmidrule{12-13} \cmidrule{15-16} \cmidrule{18-19}
    &&&&& MSE  & MAE && MSE & MAE && MSE & MAE && MSE & MAE && MSE & MAE \\

\midrule
\multirow{4}{*}{DF} & \multirow{4}{*}{\XSolidBrush} & \multirow{4}{*}{\XSolidBrush}
&   ETTm1 && 0.321 & 0.358 && 0.357 & 0.382 && 0.387 & 0.401 && 0.446 & 0.435 && 0.378 & 0.394 \\
&&& ETTh1 && 0.373 & 0.391 && 0.427 & 0.421 && 0.466 & 0.441 && 0.477 & 0.468 && 0.436 & 0.430 \\
&&& ECL && \subbst{0.137} & 0.232 && 0.153 & 0.247 && 0.168 & 0.266 && \subbst{0.199} & \subbst{0.294} && \subbst{0.165} & 0.260 \\
&&& Weather && 0.156 & 0.201 && 0.205 & 0.245 && 0.261 & 0.286 && 0.343 & 0.339 && 0.241 & 0.267 \\
\midrule
\multirow{4}{*}{KMB-DF$^\dagger$} & \multirow{4}{*}{\Checkmark} & \multirow{4}{*}{\XSolidBrush}
&   ETTm1 && \subbst{0.318} & \subbst{0.356} && \subbst{0.355} & 0.381 && \subbst{0.383} & \subbst{0.400} && \subbst{0.442} & \subbst{0.433} && \subbst{0.374} & \subbst{0.392} \\
&&& ETTh1 && \bst{0.372} & 0.391 && \bst{0.423} & 0.421 && 0.462 & 0.443 && 0.462 & 0.469 && 0.430 & 0.431 \\
&&& ECL && 0.137 & \subbst{0.232} && \subbst{0.153} & \subbst{0.247} && \subbst{0.168} & \subbst{0.265} && 0.201 & 0.294 && 0.165 & \subbst{0.259} \\
&&& Weather && 0.154 & 0.199 && \subbst{0.204} & \subbst{0.242} && 0.261 & 0.286 && 0.342 & 0.341 && 0.240 & 0.267 \\
\midrule
\multirow{4}{*}{KMB-DF$^\ddagger$} & \multirow{4}{*}{\XSolidBrush} & \multirow{4}{*}{\Checkmark}
&   ETTm1 && 0.318 & 0.356 && 0.356 & \subbst{0.380} && 0.384 & 0.401 && 0.442 & \subbst{0.433} && 0.375 & 0.393 \\
&&& ETTh1 && 0.372 & \bst{0.389} && 0.425 & \bst{0.420} && \bst{0.458} & \subbst{0.439} && \subbst{0.455} & \subbst{0.457} && \subbst{0.427} & \subbst{0.426} \\
&&& ECL && 0.137 & 0.232 && 0.153 & 0.247 && 0.168 & 0.266 && 0.200 & 0.294 && 0.165 & 0.260 \\
&&& Weather && \subbst{0.153} & \subbst{0.198} && 0.205 & 0.244 && \subbst{0.261} & \subbst{0.285} && \subbst{0.341} & \subbst{0.338} && \subbst{0.240} & \subbst{0.266} \\
\midrule
\multirow{4}{*}{KMB-DF} & \multirow{4}{*}{\Checkmark} & \multirow{4}{*}{\Checkmark}
&   ETTm1 && \bst{0.313} & \bst{0.354} && \bst{0.351} & \bst{0.379} && \bst{0.381} & \bst{0.400} && \bst{0.442} & \bst{0.433} && \bst{0.372} & \bst{0.391} \\
&&& ETTh1 && \subbst{0.372} & \subbst{0.389} && \subbst{0.423} & \subbst{0.421} && \subbst{0.460} & \bst{0.438} && \bst{0.447} & \bst{0.455} && \bst{0.426} & \bst{0.426} \\
&&& ECL && \bst{0.136} & \bst{0.231} && \bst{0.152} & \bst{0.245} && \bst{0.168} & \bst{0.265} && \bst{0.198} & \bst{0.293} && \bst{0.163} & \bst{0.258} \\
&&& Weather && \bst{0.152} & \bst{0.196} && \bst{0.204} & \bst{0.242} && \bst{0.260} & \bst{0.284} && \bst{0.339} & \bst{0.337} && \bst{0.239} & \bst{0.265} \\
    \bottomrule
\end{tabular}
\begin{tablenotes}
    \item \scriptsize \textit{Note}:  \bst{Bold} and \subbst{underlined} denote best and second-best results, respectively.
\end{tablenotes}
\end{threeparttable}
\vspace{-5pt}
\end{table*}

\subsection{Ablation studies}\label{sec:ablation}
In this section, we dissect the contributions of the soft-margin relaxation and the selective balancing mechanism inherent in KMB-DF. \autoref{tab:system_ablation_app} examines the performance of variants by selectively enabling these components. The main findings are as follows:
\ding{182} \textbf{Soft-margin relaxation tolerates empirical noise.}
KMB-DF$^\dagger$ incorporates the slack variables in Eq.~\eqref{eq:constrained_balance_kernel2} without selection. It generally outperforms DF, indicating that enforcing strict moment equivalence is overly rigid for time-series data. The soft-margin mechanism effectively accommodates inherent outliers, preventing the kernelized balancing from overfitting to spurious discrepancies.
\ding{183} \textbf{Selective mechanism highlights discriminative kernels.}
KMB-DF$^\ddagger$ employs the informativeness score in Eq.~\eqref{eq:delta} to select balancing functions while maintaining strict constraints. This approach yields improvements by prioritizing the most significant distributional mismatches in the RKHS, thereby avoiding optimization dilution caused by redundant or uninformative balancing constraints.
\ding{184} \textbf{KMB-DF integrates both for optimal alignment.}
The full framework combines flexible soft-margin constraints with informative kernel selection to yield the best performance. This demonstrates a synergistic effect: achieving sufficient distribution balance requires both identifying the most discriminative kernels to align and allowing bounded violations to handle finite-sample noise effectively.

\subsection{Kernel function selection studies}
\label{sec:kernel_studies}

In this section, we evaluate the impact of the kernel function choice $K(\cdot, \cdot)$ in instantiating the balancing constraints. We compare the proposed exponential kernel against Linear, Polynomial, Sigmoid, and Gaussian kernels, as detailed in Table~\ref{tab:varying-kernel}. Two key observations are derived:
\ding{182} \textbf{Kernelized balancing provides effective regularization.} All kernel-based objectives consistently outperform DF. This confirms that incorporating moment balancing constraints provides a necessary supervision signal that complements the point-wise MSE loss, thereby mitigating the overfitting to biased likelihoods associated with label autocorrelation.
\ding{183} \textbf{Universal kernels ensure sufficient distribution balancing.} Universal kernels (Exponential and Gaussian) outperform non-universal ones, validating our theoretical motivation. Unlike non-universal kernels that enforce matching only up to a finite order, universal kernels map representations into an infinite-dimensional RKHS. This capacity enables KMB-DF to satisfy Imbens' criterion by capturing the high-order moments required for sufficient alignment, with the exponential kernel yielding the most significant gains.


\begin{table}[t]
\centering
\caption{Varying kernel function results.}
\vspace{-5pt}
\setlength{\tabcolsep}{2.4pt}
\renewcommand{\arraystretch}{1.1}
\scriptsize
\label{tab:varying-kernel}
\begin{tabular}{l|cc|cc|cc|cc}
\hline

& \multicolumn{4}{c|}{ETTh1} & \multicolumn{4}{c}{ETTm1}\\
\hline
Kernel & MSE & $\Delta$MSE & MAE & $\Delta$MAE & MSE & $\Delta$MSE & MAE & $\Delta$MAE \\
\hline
DF & 0.436 & - & 0.430 & - & 0.378 & - & 0.394 & - \\
Linear & 0.435 & \reduce{0.3\%$\downarrow$}& 0.428 & \reduce{0.4\%$\downarrow$} & 0.375 & \reduce{0.7\%$\downarrow$}& 0.393 & \reduce{0.3\%$\downarrow$} \\
Polynomial & 0.434 & \reduce{0.4\%$\downarrow$}& 0.430 & \reduce{0.1\%$\downarrow$} & 0.375 & \reduce{0.7\%$\downarrow$}& 0.393 & \reduce{0.3\%$\downarrow$} \\
Sigmoid & 0.433 & \reduce{0.6\%$\downarrow$}& 0.430 & \reduce{0.0\%$\uparrow$} & 0.375 & \reduce{0.9\%$\downarrow$}& 0.392 & \reduce{0.5\%$\downarrow$} \\
Gaussian & 0.434 & \reduce{0.5\%$\downarrow$}& 0.430 & \reduce{0.1\%$\downarrow$} & 0.374 & \reduce{1.0\%$\downarrow$}& 0.392 & \reduce{0.4\%$\downarrow$} \\
Exponential & 0.426 & \reduce{2.3\%$\downarrow$}& 0.426 & \reduce{1.1\%$\downarrow$} & 0.372 & \reduce{1.6\%$\downarrow$}& 0.391 & \reduce{0.7\%$\downarrow$} \\
\hline
\end{tabular}
\end{table}


\begin{figure}[t]
\begin{center}
\subfigure[ECL with MSE]{\includegraphics[width=0.48\linewidth]{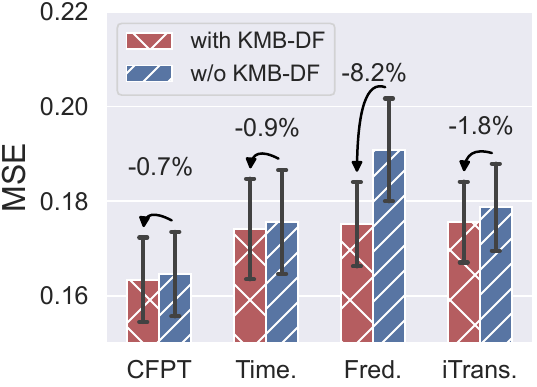}}
\subfigure[ECL with MAE]{\includegraphics[width=0.48\linewidth]{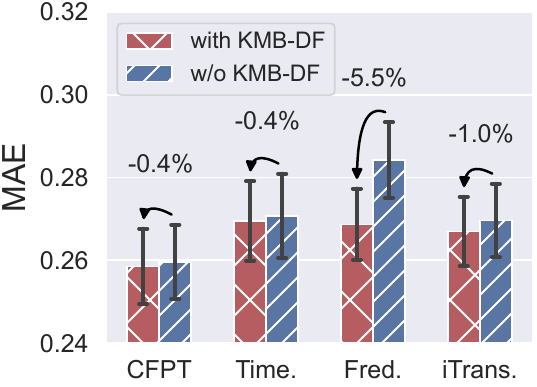}}

\subfigure[Weather with MSE]{\includegraphics[width=0.48\linewidth]{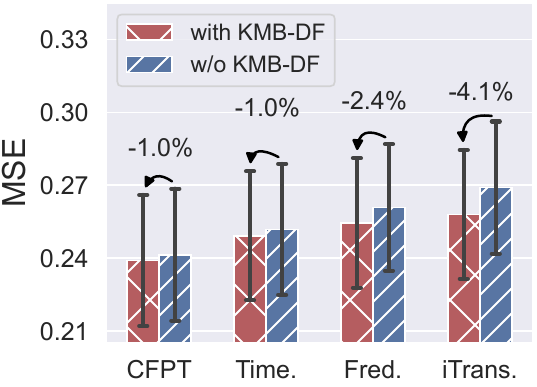}}
\subfigure[Weather with MAE]{\includegraphics[width=0.48\linewidth]{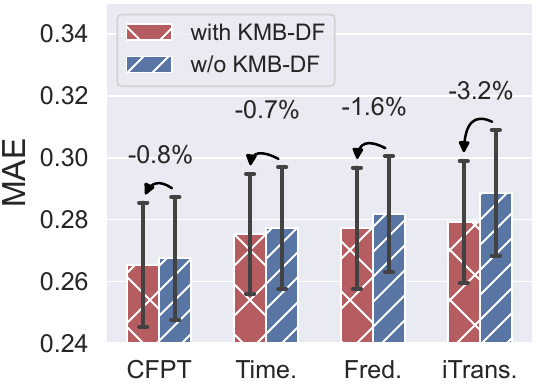}}
\caption{Improvement of KMB-DF applied to different forecast models, shown with colored bars for means over forecast lengths (96, 192, 336, 720) and error bars for 50\% confidence intervals. }
\label{fig:backbone}
\end{center}
\vspace{-10pt}
\end{figure}

\subsection{Generalization Studies}
\label{sec:generalize}
In this section, we assess the generalizability of KMB-DF by integrating it into four representative state-of-the-art forecast models: CFPT~\citep{CFPT}, TimeBridge~\citep{TimeBridge}, Fredformer~\citep{Fredformer}, and iTransformer~\citep{itransformer}. The performance improvements are visualized in Figure~\ref{fig:backbone}. 
We observe consistent performance boosts across these diverse architectures, highlighted by substantial error reductions, such as an 8.2\% drop in MSE for Fredformer on the ECL dataset. These results demonstrate that KMB-DF functions as a broadly compatible, model-agnostic learning objective. By enforcing strict distribution balance via kernelized moment matching, KMB-DF effectively corrects the distributional misalignment that standard objectives fail to address, thereby unlocking the latent predictive potential of existing models.


\begin{table}[t]
\centering
\caption{Varying $\alpha$ results.}
\vspace{-5pt}
\setlength{\tabcolsep}{4pt}
\renewcommand{\arraystretch}{1.1}
\scriptsize
\label{tab:varying-alpha}
\begin{tabular}{l|cc|cc|cc|cc}
\hline
& \multicolumn{4}{c|}{ETTh1} & \multicolumn{4}{c}{ETTm1}\\
\hline
$\alpha$ & MSE & $\Delta$MSE & MAE & $\Delta$MAE & MSE & $\Delta$MSE & MAE & $\Delta$MAE \\
\hline
DF & 0.436 & - & 0.430 & - & 0.378 & - & 0.394 & - \\
0.1 & 0.433 & \reduce{0.8\%$\downarrow$}& 0.428 & \reduce{0.5\%$\downarrow$} & 0.374 & \reduce{1.0\%$\downarrow$}& 0.392 & \reduce{0.4\%$\downarrow$} \\
0.3 & 0.431 & \reduce{1.1\%$\downarrow$}& 0.428 & \reduce{0.4\%$\downarrow$} & 0.373 & \reduce{1.2\%$\downarrow$}& 0.393 & \reduce{0.2\%$\downarrow$} \\
0.5 & 0.433 & \reduce{0.7\%$\downarrow$}& 0.430 & \reduce{0.1\%$\downarrow$} & 0.374 & \reduce{1.1\%$\downarrow$}& 0.392 & \reduce{0.5\%$\downarrow$} \\
0.7 & 0.426 & \reduce{2.3\%$\downarrow$}& 0.426 & \reduce{0.9\%$\downarrow$} & 0.374 & \reduce{1.1\%$\downarrow$}& 0.392 & \reduce{0.5\%$\downarrow$} \\
0.9 & 0.431 & \reduce{1.1\%$\downarrow$}& 0.429 & \reduce{0.3\%$\downarrow$} & 0.373 & \reduce{1.3\%$\downarrow$}& 0.392 & \reduce{0.6\%$\downarrow$} \\
\hline
\end{tabular}
\end{table}

\subsection{Hyperparameter sensitivity}\label{sec:hyper}
In this section, we examine the impact of key hyperparameters on KMB-DF's performance, including the penalty strength $\alpha$, the tolerated imbalance margin $\mathrm{C}$, and the number of balancing functions $\mathrm{K}$. The experimental results are reported in Tables~\ref{tab:varying-alpha}, \ref{tab:varying-C}, and \ref{tab:varying-K}. The primary observations are summarized as follows:
\begin{itemize}[leftmargin=*]
    \item The coefficient $\alpha$ governs the trade-off between the standard MSE loss and the kernelized moment balancing penalty. As shown in Table~\ref{tab:varying-alpha}, increasing $\alpha$ from 0.1 generally yields performance gains, identifying an optimal range around $\alpha \in [0.3, 0.7]$. Specifically, on the ETTh1 dataset, $\alpha=0.7$ achieves the lowest MSE, reducing it by 2.3\% compared to the baseline. This indicates that a sufficient penalty weight is necessary to enforce distribution alignment, while the method remains robust across a reasonably wide range of $\alpha$.

    \item The parameter $\mathrm{C}$ determines the margin of tolerated imbalance in the soft-margin relaxation. According to Table~\ref{tab:varying-C}, strictly enforcing balance with smaller margins (\eg $\mathrm{C}=0.001$) leads to significant improvements, such as a 2.3\% MSE reduction on ETTh1. As $\mathrm{C}$ increases to 0.05, the constraints become overly relaxed, causing the performance to degrade towards the baseline (DF). This confirms that maintaining a tight bound on distribution discrepancy is crucial for effective model training.

    \item The integer $\mathrm{K}$ specifies the number of selected balancing functions based on their informativeness. Table~\ref{tab:varying-K} illustrates that even introducing a single balancing function ($\mathrm{K}=1$) provides a noticeable improvement over the baseline. The performance gain peaks around $\mathrm{K}=3$ or $\mathrm{K}=4$, suggesting that a small set of the most discriminative kernels is sufficient to capture the primary distributional discrepancies. Further increasing $\mathrm{K}$ yields diminishing returns, validating the effectiveness of our selective balancing mechanism.
\end{itemize}

\begin{table}[t]
\centering
\caption{Varying $\mathrm{C}$ results.}
\vspace{-5pt}
\setlength{\tabcolsep}{3.5pt}
\renewcommand{\arraystretch}{1.1}
\scriptsize
\label{tab:varying-C}
\begin{tabular}{l|cc|cc|cc|cc}
\hline
& \multicolumn{4}{c|}{ETTh1} & \multicolumn{4}{c}{ETTm1}\\
\hline
$\mathrm{C}$ & MSE & $\Delta$MSE & MAE & $\Delta$MAE & MSE & $\Delta$MSE & MAE & $\Delta$MAE \\
\hline
DF & 0.436 & - & 0.430 & - & 0.378 & - & 0.394 & - \\
0.0005 & 0.431 & \reduce{1.2\%$\downarrow$}& 0.429 & \reduce{0.3\%$\downarrow$} & 0.374 & \reduce{1.0\%$\downarrow$}& 0.392 & \reduce{0.4\%$\downarrow$} \\
0.001 & 0.426 & \reduce{2.3\%$\downarrow$}& 0.427 & \reduce{0.8\%$\downarrow$} & 0.374 & \reduce{1.0\%$\downarrow$}& 0.392 & \reduce{0.4\%$\downarrow$} \\
0.005 & 0.431 & \reduce{1.2\%$\downarrow$}& 0.429 & \reduce{0.3\%$\downarrow$} & 0.373 & \reduce{1.3\%$\downarrow$}& 0.392 & \reduce{0.6\%$\downarrow$} \\
0.01 & 0.430 & \reduce{1.3\%$\downarrow$}& 0.428 & \reduce{0.5\%$\downarrow$} & 0.373 & \reduce{1.3\%$\downarrow$}& 0.392 & \reduce{0.6\%$\downarrow$} \\
0.05 & 0.434 & \reduce{0.5\%$\downarrow$}& 0.430 & \reduce{0.1\%$\downarrow$} & 0.374 & \reduce{1.0\%$\downarrow$}& 0.392 & \reduce{0.4\%$\downarrow$} \\
\hline
\end{tabular}
\end{table}

\begin{table}[t]
\centering
\caption{Varying $\mathrm{K}$ results.}
\setlength{\tabcolsep}{4.2pt}
\renewcommand{\arraystretch}{1.1}
\scriptsize
\label{tab:varying-K}
\begin{tabular}{l|cc|cc|cc|cc}
\hline

& \multicolumn{4}{c|}{ETTh1} & \multicolumn{4}{c}{ETTm1}\\
\hline
$\mathrm{K}$ & MSE & $\Delta$MSE & MAE & $\Delta$MAE & MSE & $\Delta$MSE & MAE & $\Delta$MAE \\
\hline
DF & 0.436 & - & 0.430 & - & 0.378 & - & 0.394 & - \\
1 & 0.432 & \reduce{0.8\%$\downarrow$}& 0.429 & \reduce{0.3\%$\downarrow$} & 0.374 & \reduce{1.1\%$\downarrow$}& 0.392 & \reduce{0.5\%$\downarrow$} \\
2 & 0.431 & \reduce{1.1\%$\downarrow$}& 0.429 & \reduce{0.4\%$\downarrow$} & 0.374 & \reduce{1.1\%$\downarrow$}& 0.392 & \reduce{0.5\%$\downarrow$} \\
3 & 0.427 & \reduce{2.1\%$\downarrow$}& 0.426 & \reduce{1.0\%$\downarrow$} & 0.374 & \reduce{1.1\%$\downarrow$}& 0.392 & \reduce{0.4\%$\downarrow$} \\
4 & 0.432 & \reduce{0.8\%$\downarrow$}& 0.430 & \reduce{0.2\%$\downarrow$} & 0.374 & \reduce{0.9\%$\downarrow$}& 0.393 & \reduce{0.3\%$\downarrow$} \\
5 & 0.427 & \reduce{2.1\%$\downarrow$}& 0.426 & \reduce{1.0\%$\downarrow$} & 0.374 & \reduce{1.1\%$\downarrow$}& 0.392 & \reduce{0.4\%$\downarrow$} \\
\hline
\end{tabular}
\vspace{-10pt}
\end{table}

\section{Conclusion}


In this work, we identify a fundamental limitation in existing deep time-series forecasting objectives: they fail to satisfy Imbens' criterion for true distribution balance, as they enforce moment matching only on a limited set of predefined functions. To address this, we propose Direct Forecasting with Kernelized Moment Balancing (KMB-DF) to ensure sufficient distribution alignment. Instead of relying on finite statistics, KMB-DF leverages Reproducing Kernel Hilbert Spaces (RKHS) to adaptively select infinite-dimensional balancing functions. We theoretically justify balancing joint distributions as a tractable proxy for aligning conditional distributions. Extensive experiments demonstrate that KMB-DF consistently achieves state-of-the-art performance by effectively mitigating autocorrelation bias.

\paragraph{Limitations.} 
While KMB-DF enforces global statistical alignment via kernelized moment matching, it does not explicitly dictate sample-wise temporal correspondence. Consequently, KMB-DF functions optimally as a regularization term rather than a standalone objective. It requires a point-wise loss, such as MSE, to anchor element-wise alignment, allowing the framework to correct distributional discrepancies while maintaining temporal precision.



\section*{Impact Statement}
This paper presents work whose goal is to advance the field of time-series forecasting.  Improvements in forecasting accuracy can yield substantial societal and economic benefits, including more reliable weather prediction, improved traffic management and scheduling, and more efficient planning and resource allocation in domains such as energy, logistics, and healthcare.
The potential risks associated with improved time-series forecasting accuracy are less direct, none of which we feel must be specifically highlighted here.

\bibliography{bib/abbr,bib/timeseries,bib/main,bib/submain,bib/supp,bib/ot,bib/causality}

@STRING{IEEE_J_SP         = "{IEEE} Trans. Signal Process."}

@STRING{IEEE_J_KDE        = "{IEEE} Trans. Knowl. Data Eng."}

@STRING{IEEE_J_PAMI       = "{IEEE} Trans. Pattern Anal. Mach. Intell."}

@STRING{IEEE_J_CE         = "{IEEE} Trans. Consum. Electron."}

@STRING{J_TMLR            = "Trans. Mach. Learn. Res."}

@STRING{J_JMLR            = "J. Mach. Learn. Res."}

@STRING{ELS_IS            = "Inf. Sci"}

@STRING{P_NIPS         = "Proc. Adv. Neural Inf. Process. Syst."}

@STRING{P_ICML         = "Proc. Int. Conf. Mach. Learn."}

@STRING{P_ICLR         = "Proc. Int. Conf. Learn. Represent."}

@STRING{P_ICASSP         = "Proc. IEEE Int. Conf. Acoust. Speech Signal Process."}

@STRING{P_AAAI         = "Proc. AAAI Conf. Artif. Intell."}

@STRING{P_SIGKDD         = "Proc. ACM SIGKDD Int. Conf. Knowl. Discovery Data Mining"}

@STRING{P_CIKM         = "Proc. ACM Int. Conf. Inf. Knowl. Manag."}

@STRING{P_VLDB         = "Proc. {VLDB} Endow."}

@article{cbps,
  title={Covariate balancing propensity score},
  author={Imai, Kosuke and Ratkovic, Marc},
  journal={J. R. Stat. Soc. Series. B. Stat. Methodol.},
  volume={76},
  number={1},
  pages={243--263},
  year={2014},
}

@article{cbgps,
  title={Covariate balancing propensity score for a continuous treatment: Application to the efficacy of political advertisements},
  author={Fong, Christian and Hazlett, Chad and Imai, Kosuke},
  journal={Ann. Appl. Stat.},
  volume={12},
  number={1},
  pages={156--177},
  year={2018},
}

@book{imbens2015causal,
  title={Causal inference in statistics, social, and biomedical sciences},
  author={Imbens, Guido W and Rubin, Donald B},
  year={2015},
  publisher={Cambridge University Press}
}

@inproceedings{wang2026iclrqdf,
  title={Quadratic Direct Forecast for Training Multi-Step Time-Series Forecast Models},
  author={Wang, Hao and Pan, Licheng and Lu, Yuan and Chen, Zhichao and Liu, Tianqiao and He, Shuting and Chu, Zhixuan and Wen, Qingsong and Li, Haoxuan and Lin, Zhouchen},
  booktitle = P_ICLR,
  pages     = {1-9},
  year      = {2026},
}

@inproceedings{wang2026iclrdistdf,
  title={DistDF: Time-Series Forecasting Needs Joint-Distribution Wasserstein Alignment},
  author={Wang, Hao and Pan, Licheng and Lu, Yuan and Chu, Zhixuan and Li, Xiaoxi and He, Shuting and Chen, Zhichao and Li, Haoxuan and Wen, Qingsong and Lin, Zhouchen},
  booktitle = P_ICLR,
  pages     = {1-9},
  year      = {2026},
}

@inproceedings{wang2025iclrfredf,
  author    = {Hao Wang and
               Licheng Pan and
               Yuan Shen and
               Zhichao Chen and
               Degui Yang and
               Yifei Yang and
               Sen Zhang and
               Xinggao Liu and
               Haoxuan Li and
               Dacheng Tao},
  title     = {FreDF: Learning to Forecast in the Frequency Domain},
  booktitle = P_ICLR,
  pages     = {1-9},
  year      = {2025},
}

@article{wang2025nipstimeo1,
  title={Time-o1: Time-Series Forecasting Needs Transformed Label Alignment},
  author={Wang, Hao and Pan, Licheng and Chen, Zhichao and Chen, Xu and Dai, Qingyang and Wang, Lei and Li, Haoxuan and Lin, Zhouchen},
  journal=P_NIPS,
  year={2025}
}

@inproceedings{chen2023mode,
  author    = {Zhichao Chen and
               Leilei Ding and
               Zhixuan Chu and
               Yucheng Qi and
               Jianmin Huang and
               Hao Wang},
  title     = {Monotonic Neural Ordinary Differential Equation: Time-series Forecasting
               for Cumulative Data},
  booktitle = P_CIKM,
  pages     = {4523--4529},
  year      = {2023},
}

@inproceedings{adam,
  author    = {Diederik P. Kingma and
               Jimmy Ba},
  title     = {Adam: {A} Method for Stochastic Optimization},
  booktitle = P_ICLR,
  pages     = {1-9},
  year      = {2015},
}

@inproceedings{li2023propensity,
  author       = {Haoxuan Li and
                  Yanghao Xiao and
                  Chunyuan Zheng and
                  Peng Wu and
                  Peng Cui},
  title        = {Propensity Matters: Measuring and Enhancing Balancing for Recommendation},
  booktitle    = P_ICML,
  volume       = {202},
  pages        = {20182--20194},
  publisher    = {{PMLR}},
  year         = {2023}
}

@inproceedings{qiutfb,
title   = {TFB: Towards Comprehensive and Fair Benchmarking of Time Series Forecasting Methods},
author  = {Xiangfei Qiu and Jilin Hu and Lekui Zhou and Xingjian Wu and Junyang Du and Buang Zhang and Chenjuan Guo and Aoying Zhou and Christian S. Jensen and Zhenli Sheng and Bin Yang},
booktitle = P_VLDB,
pages   = {2363--2377},
year    = {2024}
}

@article{qiudbloss,
  title={DBLoss: Decomposition-based Loss Function for Time Series Forecasting},
  author={Qiu, Xiangfei and Wu, Xingjian and Cheng, Hanyin and Liu, Xvyuan and Guo, Chenjuan and Hu, Jilin and Yang, Bin},
  journal=P_NIPS,
  year={2025}
}

@inproceedings{mars,
  title={MarS: a Financial Market Simulation Engine Powered by Generative Foundation Model},
  author={Li, Junjie and Liu, Yang and Liu, Weiqing and Fang, Shikai and Wang, Lewen and Xu, Chang and Bian, Jiang},
  booktitle=P_ICLR,
  year={2025}
}

@article{kmmd,
  title={A kernel two-sample test},
  author={Gretton, Arthur and Borgwardt, Karsten M and Rasch, Malte J and Sch{\"o}lkopf, Bernhard and Smola, Alexander},
  journal=J_JMLR,
  volume={13},
  number={1},
  pages={723--773},
  year={2012},
}

@article{M5,
  title={M5 accuracy competition: Results, findings, and conclusions},
  author={Makridakis, Spyros and Spiliotis, Evangelos and Assimakopoulos, Vassilios},
  journal={International journal of forecasting},
  volume={38},
  number={4},
  pages={1346--1364},
  year={2022},
  publisher={Elsevier}
}

@inproceedings{CFPT,
  title={CFPT: Empowering Time Series Forecasting through Cross-Frequency Interaction and Periodic-Aware Timestamp Modeling},
  author={Kou, Feifei and Wang, Jiahao and Shi, Lei and Yao, Yuhan and Li, Yawen and Zhu, Suguo and Zhang, Zhongbao and Du, Junping},
  booktitle={Forty-second International Conference on Machine Learning},
  year={2025}
}

@article{TimeBridge,
  title={Timebridge: Non-stationarity matters for long-term time series forecasting},
  author={Liu, Peiyuan and Wu, Beiliang and Hu, Yifan and Li, Naiqi and Dai, Tao and Bao, Jigang and Xia, Shu-tao},
  journal={arXiv preprint arXiv:2410.04442},
  year={2024}
}

@inproceedings{zhou2021informer,
  title={Informer: Beyond efficient transformer for long sequence time-series forecasting},
  author={Zhou, Haoyi and Zhang, Shanghang and Peng, Jieqi and Zhang, Shuai and Li, Jianxin and Xiong, Hui and Zhang, Wancai},
  booktitle={Proceedings of the AAAI conference on artificial intelligence},
  volume={35},
  number={12},
  pages={11106--11115},
  year={2021}
}

@book{mohri2018foundations,
  title={Foundations of machine learning},
  author={Mohri, Mehryar and Rostamizadeh, Afshin and Talwalkar, Ameet},
  year={2018},
  publisher={MIT press}
}

@article{LSTM,
  title={Long short-term memory},
  author={Hochreiter, Sepp and Schmidhuber, J{\"u}rgen},
  journal={Neural Comput.},
  volume={9},
  number={8},
  pages={1735--1780},
  year={1997},
}

@article{GRU,
  title={On the properties of neural machine translation: Encoder-decoder approaches},
  author={Cho, Kyunghyun and Van Merri{\"e}nboer, Bart and Bahdanau, Dzmitry and Bengio, Yoshua},
  journal={arXiv preprint arXiv:1409.1259},
  year={2014}
}

@article{DeepAR,
  title={DeepAR: Probabilistic forecasting with autoregressive recurrent networks},
  author={Salinas, David and Flunkert, Valentin and Gasthaus, Jan and Januschowski, Tim},
  journal={Int. J. Forecast},
  volume={36},
  number={3},
  pages={1181--1191},
  year={2020},
}

@inproceedings{P-sLSTM,
  title={Unlocking the power of lstm for long term time series forecasting},
  author={Kong, Yaxuan and Wang, Zepu and Nie, Yuqi and Zhou, Tian and Zohren, Stefan and Liang, Yuxuan and Sun, Peng and Wen, Qingsong},
  booktitle=P_AAAI,
  volume={39},
  number={11},
  pages={11968--11976},
  year={2025}
}

@inproceedings{gu2023mamba,
  title={Mamba: Linear-time sequence modeling with selective state spaces},
  author={Gu, Albert and Dao, Tri},
  booktitle={Proc. Conf. Lang. Model.},
  year={2023}
}

@article{MixMamba,
  title={MixMamba: Time series modeling with adaptive expertise},
  author={Alkilane, Khaled and He, Yihang and Lee, Der-Horng},
  journal={Inf. Fusion},
  volume={112},
  pages={102589},
  year={2024},
}

@inproceedings{MICN,
  title={Micn: Multi-scale local and global context modeling for long-term series forecasting},
  author={Wang, Huiqiang and Peng, Jian and Huang, Feihu and Wang, Jince and Chen, Junhui and Xiao, Yifei},
  booktitle=P_ICLR,
  year={2023}
}

@inproceedings{Timesnet,
  title={TimesNet: Temporal 2D-Variation Modeling for General Time Series Analysis},
  author={Wu, Haixu and Hu, Tengge and Liu, Yong and Zhou, Hang and Wang, Jianmin and Long, Mingsheng},
  booktitle=P_ICLR,
  year={2023}
}

@inproceedings{Moderntcn,
  title={Moderntcn: A modern pure convolution structure for general time series analysis},
  author={Luo, Donghao and Wang, Xue},
  booktitle=P_ICLR,
  pages={1--43},
  year={2024}
}

@article{TiDE,
  title={Long-term Forecasting with TiDE: Time-series Dense Encoder},
  author={Das, Abhimanyu and Kong, Weihao and Leach, Andrew and Sen, Rajat and Yu, Rose},
  journal=J_TMLR,
  year={2023}
}

@inproceedings{DLinear,
  title={Are Transformers Effective for Time Series Forecasting?},
  author={Ailing Zeng and Muxi Chen and Lei Zhang and Qiang Xu},
  booktitle=P_AAAI,
  year={2023}
}

@inproceedings{FreTS,
  title={Frequency-domain MLPs are More Effective Learners in Time Series Forecasting},
  author={Yi, Kun and Zhang, Qi and Fan, Wei and Wang, Shoujin and Wang, Pengyang and He, Hui and An, Ning and Lian, Defu and Cao, Longbing and Niu, Zhendong},
  booktitle=P_NIPS,
  year={2023}
}

@article{Autotimes,
  title={Autotimes: Autoregressive time series forecasters via large language models},
  author={Liu, Yong and Qin, Guo and Huang, Xiangdong and Wang, Jianmin and Long, Mingsheng},
  journal=P_NIPS,
  volume={37},
  pages={122154--122184},
  year={2024}
}

@article{GPT4TS,
  title={One fits all: Power general time series analysis by pretrained lm},
  author={Zhou, Tian and Niu, Peisong and Sun, Liang and Jin, Rong and others},
  journal=P_NIPS,
  volume={36},
  pages={43322--43355},
  year={2023}
}

@inproceedings{Simpletm,
  title={SimpleTM: A Simple Baseline for Multivariate Time Series Forecasting},
  author={Chen, Hui and Luong, Viet and Mukherjee, Lopamudra and Singh, Vikas},
  booktitle=P_ICLR,
  year={2025}
}

@inproceedings{PatchTST,
  title={A Time Series is Worth 64 Words: Long-term Forecasting with Transformers},
  author={Nie, Yuqi and Nguyen, Nam H and Sinthong, Phanwadee and Kalagnanam, Jayant},
  booktitle=P_ICLR,
  year={2023}
}

@article{Time-LLM,
  title={Time-llm: Time series forecasting by reprogramming large language models},
  author={Jin, Ming and Wang, Shiyu and Ma, Lintao and Chu, Zhixuan and Zhang, James Y and Shi, Xiaoming and Chen, Pin-Yu and Liang, Yuxuan and Li, Yuan-Fang and Pan, Shirui and others},
  journal=P_ICLR,
  year={2024}
}

@inproceedings{Autoformer,
  title={Autoformer: Decomposition Transformers with {Auto-Correlation} for Long-Term Series Forecasting},
  author={Haixu Wu and Jiehui Xu and Jianmin Wang and Mingsheng Long},
  booktitle=P_NIPS,
  year={2021}
}

@inproceedings{itransformer,
  title={itransformer: Inverted transformers are effective for time series forecasting},
  author={Liu, Yong and Hu, Tengge and Zhang, Haoran and Wu, Haixu and Wang, Shiyu and Ma, Lintao and Long, Mingsheng},
  booktitle=P_ICLR,
  year={2024}
}

@inproceedings{fedformer,
  title={{FEDformer}: Frequency enhanced decomposed transformer for long-term series forecasting},
  author={Zhou, Tian and Ma, Ziqing and Wen, Qingsong and Wang, Xue and Sun, Liang and Jin, Rong},
  booktitle=P_ICML,
  year={2022}
}

@inproceedings{Fredformer,
  title={Fredformer: Frequency debiased transformer for time series forecasting},
  author={Piao, Xihao and Chen, Zheng and Murayama, Taichi and Matsubara, Yasuko and Sakurai, Yasushi},
  booktitle=P_SIGKDD,
  pages={2400--2410},
  year={2024}
}

@article{tsinghuasurvey,
  title={Deep time series models: A comprehensive survey and benchmark},
  author={Wang, Yuxuan and Wu, Haixu and Dong, Jiaxiang and Liu, Yong and Long, Mingsheng and Wang, Jianmin},
  journal={arXiv preprint arXiv:2407.13278},
  year={2024}
}

@article{acmsurvey,
  title={Deep learning for time series forecasting: Tutorial and literature survey},
  author={Benidis, Konstantinos and Rangapuram, Syama Sundar and Flunkert, Valentin and Wang, Yuyang and Maddix, Danielle and Turkmen, Caner and Gasthaus, Jan and Bohlke-Schneider, Michael and Salinas, David and Stella, Lorenzo and others},
  journal={ACM Comput. Surv.},
  volume={55},
  number={6},
  pages={1--36},
  year={2022},
}

@inproceedings{tqnet,
  title={Temporal Query Network for Efficient Multivariate Time Series Forecasting},
  author={Lin, Shengsheng and Chen, Haojun and Wu, Haijie and Qiu, Chunyun and Lin, Weiwei},
  booktitle=P_ICML,
  year={2025}
}

@inproceedings{lin2024cyclenet,
  title={Cyclenet: Enhancing time series forecasting through modeling periodic patterns},
  author={Lin, Shengsheng and Lin, Weiwei and Hu, Xinyi and Wu, Wentai and Mo, Ruichao and Zhong, Haocheng},
  booktitle=P_NIPS,
  volume={37},
  pages={106315--106345},
  year={2024}
}

@article{application_weather2,
  title={Interpretable weather forecasting for worldwide stations with a unified deep model},
  author={Wu, Haixu and Zhou, Hang and Long, Mingsheng and Wang, Jianmin},
  journal={Nat. Mach. Intell.},
  pages={1--10},
  year={2023},
  publisher={Nature Publishing Group UK London}
}

@article{koopman,
  title={From Fourier to Koopman: Spectral methods for long-term time series prediction},
  author={Lange, Henning and Brunton, Steven L and Kutz, J Nathan},
  journal={Journal of Machine Learning Research},
  volume={22},
  number={41},
  pages={1--38},
  year={2021}
}

@article{OLinear,
  title={OLinear: A Linear Model for Time Series Forecasting in Orthogonally Transformed Domain},
  author={Yue, Wenzhen and Liu, Yong and Li, Haoxuan and Wang, Hao and Ying, Xianghua and Guo, Ruohao and Xing, Bowei and Shi, Ji},
  journal=P_NIPS,
  year={2025}
}

@inproceedings{lossshapeconstraint,
  title={Loss Shaping Constraints for Long-Term Time Series Forecasting},
  author={Hounie, Ignacio and Porras-Valenzuela, Javier and Ribeiro, Alejandro},
  booktitle=P_ICML,
  year={2024}
}

@inproceedings{soft-dtw,
  title={Soft-dtw: a differentiable loss function for time-series},
  author={Cuturi, Marco and Blondel, Mathieu},
  booktitle=P_ICML,
  pages={894--903},
  year={2017},
  organization={PMLR}
}

@article{dtw,
  title={Dynamic programming algorithm optimization for spoken word recognition},
  author={Sakoe, Hiroaki and Chiba, Seibi},
  journal=IEEE_J_SP,
  volume={26},
  number={1},
  pages={43--49},
  year={2003},
}

@inproceedings{psloss,
  title={Patch-wise Structural Loss for Time Series Forecasting},
  author={Kudrat, Dilfira and Xie, Zongxia and Sun, Yanru and Jia, Tianyu and Hu, Qinghua},
  booktitle=P_ICML,
  year={2025}
}

@inproceedings{GDTW,
  title={Gdtw: A novel differentiable dtw loss for time series tasks},
  author={Liu, Xiang and Li, Naiqi and Xia, Shu-Tao},
  booktitle=P_ICASSP,
  pages={2860--2864},
  year={2021},
  organization={IEEE}
}

@article{timesql,
  title={TimeSQL: Improving multivariate time series forecasting with multi-scale patching and smooth quadratic loss},
  author={Wang, Haoxin and Li, Bixiong and Fan, Songhai and Wu, Yuankai and Liu, Xianggen},
  journal=ELS_IS,
  volume={671},
  pages={120652},
  year={2024},
}

@article{Dilate,
  title={Shape and time distortion loss for training deep time series forecasting models},
  author={Le Guen, Vincent and Thome, Nicolas},
  journal=P_NIPS,
  volume={32},
  year={2019}
}

@article{STRIPE2,
  author={Le Guen, Vincent and Thome, Nicolas},
  journal=IEEE_J_PAMI, 
  title={Deep Time Series Forecasting With Shape and Temporal Criteria}, 
  year={2023},
  volume={45},
  number={1},
  pages={342-355},
}

@article{tdalign,
  title={Modeling temporal dependencies within the target for long-term time series forecasting},
  author={Xiong, Qi and Tang, Kai and Ma, Minbo and Zhang, Ji and Xu, Jie and Li, Tianrui},
  journal=IEEE_J_KDE,
  year={2025},
}

@article{AST,
  title={Adversarial sparse transformer for time series forecasting},
  author={Wu, Sifan and Xiao, Xi and Ding, Qianggang and Zhao, Peilin and Wei, Ying and Huang, Junzhou},
  journal=P_NIPS,
  volume={33},
  pages={17105--17115},
  year={2020}
}

@article{SDGCN,
  title={Transformer-Based Generative Adversarial Network for Traffic Forecasting},
  author={Liu, Bingyi and Yuan, Lingtian and Shao, Xun and Wang, Enshu and Xia, Zhenchang and Han, Weizhen and Wu, Celimuge},
  journal=IEEE_J_CE,
  year={2025},
  publisher={IEEE}
}

@inproceedings{TimeGrad,
  title={Autoregressive denoising diffusion models for multivariate probabilistic time series forecasting},
  author={Rasul, Kashif and Seward, Calvin and Schuster, Ingmar and Vollgraf, Roland},
  booktitle=P_ICML,
  pages={8857--8868},
  year={2021},
  organization={PMLR}
}

@article{CSDI,
  title={Csdi: Conditional score-based diffusion models for probabilistic time series imputation},
  author={Tashiro, Yusuke and Song, Jiaming and Song, Yang and Ermon, Stefano},
  journal=P_NIPS,
  volume={34},
  pages={24804--24816},
  year={2021}
}
\bibliographystyle{icml2026}

\clearpage
\appendix
\onecolumn

\section{Extended Related Work}
\label{sec:extended_related_work}

This section provides an extended review of deep time-series forecasting literature through the lens of autocorrelation modeling. As delineated in the Introduction, existing research primarily addresses two fundamental challenges: (1) devising neural architectures to model autocorrelation in history sequences, and (2) designing learning objectives to accommodate autocorrelation in label sequences.

\subsection{Neural Architectures for History Autocorrelation}
Recent advancements in neural architectures focus on capturing temporal dependencies within the historical input to generate latent representations for forecasting. These approaches can be broadly categorized into non-Transformer and Transformer-based models.

\paragraph{Non-Transformer Models.} This category includes Recurrent Neural Networks (RNNs), Convolutional Neural Networks (CNNs), and Dense Neural Networks (DNNs). 
\ding{182} RNNs naturally model sequential autocorrelation via hidden states. While traditional RNNs (e.g., LSTM~\citep{LSTM}, GRU~\citep{GRU}) suffer from efficiency bottlenecks, recent State Space Models (SSMs) like Mamba~\citep{gu2023mamba} and its variants (e.g., MixMamba~\citep{MixMamba}) have emerged to efficiently model long-range dependencies with linear complexity.
\ding{183} CNNs capture local autocorrelation through convolutional kernels. To expand the receptive field for long-term forecasting, recent works employ large-kernel designs (e.g., ModernTCN~\citep{Moderntcn}) or operate in the frequency domain (e.g., TimesNet~\citep{Timesnet}, MICN~\citep{micn}).
\ding{184} DNNs utilizing simple linear layers have shown competitive performance by treating history sequences as feature vectors. Representative models like DLinear~\citep{DLinear} and TiDE~\citep{TiDE} demonstrate that Multi-Layer Perceptrons (MLPs) can effectively model autocorrelation when combined with decomposition or channel-independence strategies.

\paragraph{Transformer-based Models.} Transformers utilize self-attention mechanisms to model high-order autocorrelation by computing pairwise similarities across time steps.
\ding{182} Standard Self-Attention Models often employ tokenization to incorporate local context. For instance, PatchTST~\citep{PatchTST} segments series into patches to retain semantic meaning. A growing trend involves adapting Large Language Models (LLMs) for time-series (e.g., Time-LLM~\citep{Time-LLM}, GPT4TS~\citep{GPT4TS}) via reprogramming or fine-tuning to leverage their pre-trained reasoning capabilities.
\ding{183} Modified Attention Models refine the attention mechanism to reduce complexity or enhance feature extraction. Notable approaches include frequency-domain attention (e.g., FedFormer~\citep{fedformer}, FreDF~\citep{wang2025iclrfredf}) and inverted attention mechanisms that model correlations across variates (e.g., iTransformer~\citep{itransformer}).

\subsection{Learning Objectives for Label Autocorrelation}
While architectures focus on inputs, learning objectives aim to mitigate the bias arising from label autocorrelation, where future values are statistically dependent on their predecessors. Standard objectives like Mean Squared Error (MSE) assume conditional independence, leading to suboptimal training. Recent objectives mainly address this issue via likelihood estimation, shape alignment, conditional generation, and distribution balancing.

\paragraph{Likelihood Estimation.} These methods aim to refine the negative log-likelihood (NLL) estimation.
\ding{182} Label Transformation: To eliminate autocorrelation bias, methods like FreDF~\citep{wang2025iclrfredf} and Time-o1~\citep{wang2025nipstimeo1} transform labels into frequency or principal component domains, respectively, aiming to whiten the label distribution before applying point-wise losses.
\ding{182} Covariance Modeling: Approaches such as QDF~\citep{wang2026iclrqdf} explicitly model the conditional covariance matrix to approximate the true NLL, often utilizing meta-learning or auxiliary networks.

\paragraph{Shape Alignment.} Recognizing that autocorrelation manifests in morphological shapes, these objectives measure geometric dissimilarity. SoftDTW~\citep{soft-dtw} provides a differentiable relaxation of Dynamic Time Warping. Variants like Dilate~\citep{Dilate} and STRIPE~\citep{STRIPE2} further incorporate temporal distortion penalties to better align forecast shapes with ground truths.

\paragraph{Conditional Generation.} This paradigm reframes forecasting as a generative task to implicitly capture label autocorrelation. Diffusion models (e.g., TimeGrad~\citep{TimeGrad}, CSDI~\citep{CSDI}) generate forecasts by learning the conditional distribution $\mathbb{P}(\mathbf{Y}|\mathbf{X})$ via iterative denoising. Autoregressive models (e.g., DeepAR~\citep{DeepAR}, AutoTimes~\citep{Autotimes}) generate predictions step-by-step to explicitly condition on previous outputs, though often suffering from error accumulation.

\paragraph{Distribution Balancing.} Most relevant to our work, these methods view forecasting as aligning the distribution of forecasts with ground truths.
\ding{182} Adversarial Training: Methods like AST~\citep{AST} and variants~\citep{SDGCN} employ GAN-based discriminators to distinguish between predicted and real sequences, enforcing distributional alignment implicitly.
\ding{183} Discrepancy Minimization: Recent works like DistDF~\citep{wang2026iclrdistdf} explicitly minimize distributional discrepancies (e.g., joint-distribution Wasserstein distance).

While distribution balancing methods are theoretically appealing, existing approaches typically enforce moment matching only for limited or predefined functions (e.g., first/second moments in DistDF or specific discriminator architectures in AST). As discussed in Section~\ref{subsec:motivation}, they fail to satisfy Imbens' criterion, which requires equivalence across \textbf{any} balancing function. Our proposed KMB-DF addresses this gap by leveraging the Reproducing Kernel Hilbert Space (RKHS) to adaptively select infinite dimensional balancing functions, ensuring sufficient distribution alignment.

\section{Theoretical Justification}\label{sec:theoretical_justification}

In this section, we delve into the theoretical underpinnings of the framework. By exploiting the universality, injectivity, and reproducing properties of kernel functions, we derive formal guarantees for the proposed method and demonstrate how these properties translate into effective balancing.

\begin{lemma}[Autocorrelation bias, Lemma~\ref{lem:bias} in the main text]
    Let $\mathbf{y}\in\mathbb{R}^\mathrm{T}$ be a univariate label sequence with conditional covariance $\mathbf{\Sigma}\in\mathbb{R}^{\mathrm{T}\times\mathrm{T}}$. $\mathcal{E}_\mathrm{MSE}$ in~\eqref{eq:mse} is biased against the likelihood of $\mathbf{y}$ unless $\mathbf{\Sigma}$ is diagonal, \ie different steps in $\mathbf{y}$ are conditionally decorrelated given $\mathbf{X}$.
\end{lemma}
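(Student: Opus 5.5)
We prove the statement by writing the exact conditional log-likelihood of $\mathbf{y}$ under a Gaussian model and comparing it with $\mathcal{E}_\mathrm{MSE}$ in~\eqref{eq:mse}. Take $\mathbf{y}\mid\mathbf{X}\sim\mathcal{N}(\hat{\mathbf{y}},\mathbf{\Sigma})$, where $\hat{\mathbf{y}}=g(\mathbf{X})$ is the forecast playing the role of the conditional mean and $\mathbf{\Sigma}$ is positive definite. The negative log-likelihood is
\begin{equation*}
-\log p(\mathbf{y}\mid\mathbf{X}) = \tfrac12(\mathbf{y}-\hat{\mathbf{y}})^\top\mathbf{\Sigma}^{-1}(\mathbf{y}-\hat{\mathbf{y}}) + \tfrac12\log\det(2\pi\mathbf{\Sigma}).
\end{equation*}
The log-determinant does not depend on the model parameters. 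So the likelihood-consistent objective, up to constants, is the Mahalanobis form $\mathcal{L}_\Sigma=(\mathbf{y}-\hat{\mathbf{y}})^\top\mathbf{\Sigma}^{-1}(\mathbf{y}-\hat{\mathbf{y}})$. Meanwhile $\mathcal{E}_\mathrm{MSE}=(\mathbf{y}-\hat{\mathbf{y}})^\top\mathbf{I}(\mathbf{y}-\hat{\mathbf{y}})$.

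Next we fix what \emph{unbiased} means. $\mathcal{E}_\mathrm{MSE}$ is unbiased against the likelihood if $\mathcal{L}_\Sigma = c\,\mathcal{E}_\mathrm{MSE} + \text{const}$ for every residual $\mathbf{r}=\mathbf{y}-\hat{\mathbf{y}}$, where $c>0$ and the constant may depend on $\mathbf{\Sigma}$ but not on $\mathbf{r}$. If the residual-dependent parts disagree for some $\mathbf{r}$, the two objectives rank forecasts, and hence gradients, differently, and we call $\mathcal{E}_\mathrm{MSE}$ biased. We can then write the gap as
\begin{equation*}
\mathrm{Bias}=\mathbf{r}^\top(\mathbf{\Sigma}^{-1}-c\mathbf{I})\mathbf{r},
\end{equation*}
and the claim reduces to a statement about this quadratic form.

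The key step is to show that this form vanishes for all $\mathbf{r}$ only when $\mathbf{\Sigma}$ is diagonal. Since $\mathbf{\Sigma}^{-1}-c\mathbf{I}$ is symmetric, the quadratic form vanishes identically only if the matrix is zero, that is $\mathbf{\Sigma}=c^{-1}\mathbf{I}$, which is in particular diagonal. Taking the contrapositive: if $\mathbf{\Sigma}$ has a nonzero off-diagonal entry $\Sigma_{ij}$, then $\mathbf{\Sigma}^{-1}$ also has a nonzero off-diagonal entry, because the inverse of a diagonal matrix is diagonal and inversion is an involution. We then pick $\mathbf{r}=\mathbf{e}_i+\mathbf{e}_j$ and $\mathbf{r}=\mathbf{e}_i-\mathbf{e}_j$. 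These share the same $\mathcal{E}_\mathrm{MSE}$ value, but $\mathcal{L}_\Sigma$ differs between them by $4(\mathbf{\Sigma}^{-1})_{ij}\neq0$. So no constant $c$ can make the objectives agree, and the bias is strict.

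The main obstacle is the diagonal but non-isotropic case, where $\mathbf{\Sigma}$ is diagonal with unequal variances. There $\mathcal{E}_\mathrm{MSE}$ differs from $\mathcal{L}_\Sigma$ by a per-step reweighting. We will handle this by stating that the statement concerns the autocorrelation-induced component of the bias: with diagonal $\mathbf{\Sigma}$, the steps decouple and $\mathcal{E}_\mathrm{MSE}$ becomes a per-step weighted least squares whose per-step minimizer $\hat y_t=\mathbb{E}[y_t\mid\mathbf{X}]$ coincides with the likelihood minimizer. Following the convention of Time-o1 and FreDF, we will normalize to unit marginal variances when stating equivalence. Under this reading, diagonal $\mathbf{\Sigma}$ removes the cross terms $\sum_{i\neq j}(\mathbf{\Sigma}^{-1})_{ij}r_ir_j$, and these are exactly what $\mathcal{E}_\mathrm{MSE}$ ignores.
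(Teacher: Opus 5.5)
Your proposal is correct and follows essentially the same route as the proof the paper defers to (Theorem 3.1 of Time-o1). Both write the Gaussian negative log-likelihood as the Mahalanobis form $(\mathbf{y}-\hat{\mathbf{y}})^\top\mathbf{\Sigma}^{-1}(\mathbf{y}-\hat{\mathbf{y}})$, take its gap to $\mathcal{E}_\mathrm{MSE}$ as the bias, and note that the gap vanishes for diagonal $\mathbf{\Sigma}$ under the unit-variance (correlation-matrix) convention; your $\mathbf{e}_i\pm\mathbf{e}_j$ residuals make the ``biased otherwise'' direction explicit. One small fix: a nonzero $\Sigma_{ij}$ does not force $(\mathbf{\Sigma}^{-1})_{ij}\neq 0$; for example, an AR(1)-type covariance has $\Sigma_{13}\neq 0$ but a tridiagonal inverse with $(\mathbf{\Sigma}^{-1})_{13}=0$, so take $i,j$ at a nonzero off-diagonal entry of $\mathbf{\Sigma}^{-1}$, which you already showed exists.
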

\begin{proof}
    The proof can be found in Theorem 3.1 of \citet{wang2025nipstimeo1}.
\end{proof}

\begin{lemma}[Representer theorem]\label{lem:A1}
    Suppose $h(\|f\|): \mathbb{R}_{+} \rightarrow \mathbb{R}$ is a non-decreasing function. The minimizer of an empirical risk functional regularized by $h(\|f\|)$ admits the form: $f^*(\cdot)=\sum_{n=1}^\mathrm{N} \alpha_n K(\cdot, \mathbf{Z}_n)$, where $K$ is the kernel function associated with the RKHS $\mathcal{H}$. 
\end{lemma}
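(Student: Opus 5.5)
The plan is the classical orthogonal-decomposition argument. I would first make the statement precise: the functional is
\[
J(f)=L\bigl(f(\mathbf{Z}_1),\dots,f(\mathbf{Z}_\mathrm{N})\bigr)+h(\|f\|_{\mathcal{H}}), \qquad f\in\mathcal{H},
\]
where $L:\mathbb{R}^\mathrm{N}\to\mathbb{R}\cup\{+\infty\}$ is an arbitrary empirical risk depending on $f$ only through its values at the training points. Set $\mathcal{H}_\mathrm{N}=\operatorname{span}\{K(\cdot,\mathbf{Z}_n)\}_{n=1}^\mathrm{N}$. This is a finite-dimensional, hence closed, subspace of $\mathcal{H}$, so every $f\in\mathcal{H}$ decomposes uniquely as $f=f_\parallel+f_\perp$ with $f_\parallel\in\mathcal{H}_\mathrm{N}$ and $f_\perp\perp\mathcal{H}_\mathrm{N}$.

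The first key step is that the risk term ignores $f_\perp$. By the reproducing property (the feature-map identity of Definition~\ref{def:kernel}, with $\psi(\mathbf{Z})=K(\cdot,\mathbf{Z})$),
\[
f(\mathbf{Z}_n)=\langle f, K(\cdot,\mathbf{Z}_n)\rangle_{\mathcal{H}}=\langle f_\parallel,K(\cdot,\mathbf{Z}_n)\rangle_{\mathcal{H}}+\langle f_\perp,K(\cdot,\mathbf{Z}_n)\rangle_{\mathcal{H}}=f_\parallel(\mathbf{Z}_n),
\]
because $f_\perp$ is orthogonal to every generator of $\mathcal{H}_\mathrm{N}$. Hence $L$ takes the same value at $f$ and at $f_\parallel$.

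The second step controls the regularizer. By Pythagoras, $\|f\|^2=\|f_\parallel\|^2+\|f_\perp\|^2\ge\|f_\parallel\|^2$, and since $h$ is non-decreasing, $h(\|f_\parallel\|)\le h(\|f\|)$. Combining the two steps gives $J(f_\parallel)\le J(f)$ for every $f\in\mathcal{H}$. So if $f$ is any minimizer, then $f_\parallel=\sum_n\alpha_nK(\cdot,\mathbf{Z}_n)$ is also a minimizer, and it has the claimed form.

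The main obstacle is the strength of the conclusion. With $h$ only non-decreasing, we cannot say that every minimizer lies in $\mathcal{H}_\mathrm{N}$. For instance, if $h$ is constant, adding any $f_\perp$ preserves optimality. I would therefore state the result as ``a minimizer admitting this form exists whenever a minimizer exists'' (existence of a minimizer is an implicit assumption of the lemma). I would then add the standard remark that if $h$ is strictly increasing, then $f_\perp\neq0$ forces $J(f_\parallel)<J(f)$, so every minimizer has the form $\sum_n\alpha_nK(\cdot,\mathbf{Z}_n)$. This is the version needed when the lemma is later invoked to place the optimal discrimination function in a finite kernel span.
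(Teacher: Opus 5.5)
Your proposal is correct and follows essentially the same route as the paper: the paper proves this lemma only by citing Theorem 6.11 of \citet{mohri2018foundations}, whose proof is exactly your orthogonal-projection argument (reproducing property kills $f_\perp$ in the empirical risk, Pythagoras plus monotonicity of $h$ controls the regularizer). Your refinement is the right reading of the lemma, and it matches the two-part statement of the cited theorem: a non-decreasing $h$ only guarantees that \emph{some} minimizer (assuming one exists) has the form $\sum_n \alpha_n K(\cdot,\mathbf{Z}_n)$, while a strictly increasing $h$ forces \emph{every} minimizer to have it.
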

\begin{proof}
    The proof can be found in Theorem 6.11 of \citet{mohri2018foundations}.
\end{proof}

\begin{theorem}
    \label{thm:kernel_align_app}
    Suppose $\mathcal{H}$ is the RKHS of a exponential kernel $K$, $f^* \in \mathcal{H}$ is the discrimination function that maximizes the discrepancy between two distributions $\mathbb{P}$ and $\mathbb{Q}$. If $f^*$ lies within the linear span of a finite kernel function set: $f^*\in \{K(\cdot, \mathbf{Z}_k)\}_{k=1}^\mathrm{K}$; then $\mathbb{P}$ and $\mathbb{Q}$ are balanced if $\forall k \in [\mathrm{K}]:$ $\mathbb{E}_{\mathbf{Z}\in\mathbb{P}}[K(\mathbf{Z}, \mathbf{Z}_k)] = \mathbb{E}_{\hat{\mathbf{Z}}\in\mathbb{Q}}[K(\hat{\mathbf{Z}}, \mathbf{Z}_k)]$.
\end{theorem}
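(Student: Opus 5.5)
The plan is to read ``the discrimination function that maximizes the discrepancy'' as the MMD witness function. Concretely, I would take
\[
f^* \in \arg\max_{\|f\|_{\mathcal{H}}\le 1}\Bigl(\mathbb{E}_{\mathbf{Z}\in\mathbb{P}}[f(\mathbf{Z})]-\mathbb{E}_{\hat{\mathbf{Z}}\in\mathbb{Q}}[f(\hat{\mathbf{Z}})]\Bigr),
\]
whose optimal value is $\mathrm{MMD}(\mathbb{P},\mathbb{Q})=\sup_{\|f\|_{\mathcal{H}}\le 1}(\mathbb{E}_\mathbb{P} f-\mathbb{E}_\mathbb{Q} f)$. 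The argument then has three steps.

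\emph{Step 1: the discrepancy vanishes.} By hypothesis $f^*=\sum_{k=1}^\mathrm{K}\beta_k K(\cdot,\mathbf{Z}_k)$ for some coefficients $\beta_k$. The hypothesis is written as membership in the set, but I would read it as membership in its linear span; this matches the span-type conclusion of the Representer theorem (Lemma~\ref{lem:A1}). Linearity of expectation then gives
\[
\mathrm{MMD}(\mathbb{P},\mathbb{Q})=\sum_{k=1}^\mathrm{K}\beta_k\bigl(\mathbb{E}_{\mathbb{P}}[K(\mathbf{Z},\mathbf{Z}_k)]-\mathbb{E}_{\mathbb{Q}}[K(\hat{\mathbf{Z}},\mathbf{Z}_k)]\bigr)=0,
\]
since each bracket vanishes by the assumed moment equalities. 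Because $f^*$ attains the supremum, every $f$ in the unit ball satisfies $\mathbb{E}_\mathbb{P} f-\mathbb{E}_\mathbb{Q} f\le 0$. Applying this to $-f$ gives the reverse inequality, and rescaling extends it, so $\mathbb{E}_\mathbb{P} f=\mathbb{E}_\mathbb{Q} f$ for all $f\in\mathcal{H}$.

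\emph{Step 2: extend to continuous functions.} I would restrict to a compact domain $\mathcal{Z}$ as in Definition~\ref{def:univ}. Since the exponential kernel is universal, any continuous $e$ and $\epsilon>0$ admit $f\in\mathcal{H}$ with $\|f-e\|_\infty\le\epsilon$. Then
\[
|\mathbb{E}_\mathbb{P} e-\mathbb{E}_\mathbb{Q} e|\le|\mathbb{E}_\mathbb{P} f-\mathbb{E}_\mathbb{Q} f|+2\epsilon=2\epsilon .
\]
Letting $\epsilon\to0$ gives Imbens' criterion \eqref{eq:imbens} for all continuous balancing functions $\phi$.

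\emph{Step 3: conclude $\mathbb{P}=\mathbb{Q}$.} Equality of expectations over all of $C(\mathcal{Z})$ forces equality of the two measures, by the Riesz representation theorem (regular Borel measures on a compact Hausdorff space are determined by their action on continuous functions). Hence the criterion holds for every bounded measurable $\phi$ as well, and the two distributions are balanced in the sense of the paper.

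The main obstacle is making Step~1 rigorous: that $f^*$ is a genuine maximizer over a norm ball, so that a zero value forces the discrepancy to vanish for all $f$. I would handle it by stating the normalization explicitly and using symmetry of the ball under $f\mapsto -f$. A related point is the degenerate case where the optimum is $0$ and $f^*$ is arbitrary; the argument still goes through, since a zero supremum is exactly what we need. The compactness assumption needed for universality I would state as a standing assumption, for instance bounded normalized time-series windows.
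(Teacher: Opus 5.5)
Your proposal is correct and follows the paper's proof. Step 1 is exactly the paper's computation: the witness $f^*$, written as a finite kernel expansion, has zero mean discrepancy by linearity of expectation, so $\mathrm{MMD}(\mathbb{P},\mathbb{Q})=0$. Your Steps 2--3 unpack the result the paper simply cites from Gretton et al. (for a universal kernel on a compact domain, $\mathrm{MMD}=0$ implies $\mathbb{P}=\mathbb{Q}$), using uniform approximation and Riesz representation, and your compactness assumption makes explicit one the paper leaves implicit.
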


\begin{proof}

    We start by clarifying the maximum mean discrepancy (MMD) as our discrepancy measure between two distributions. Specifically, given a universal kernel $K$ (e.g., exponential kernel), the MMD is defiend as:
    \begin{equation}
        \text{MMD}^2(\mathbb{P}, \mathbb{Q}) = \sup_{f \in \mathcal{H}, \|f\|_{\mathcal{H}} \leq 1} \left| \mathbb{E}_{\mathbf{Z} \sim \mathbb{P}}[f(\mathbf{Z})] - \mathbb{E}_{\hat{\mathbf{Z}} \sim \mathbb{Q}}[f(\hat{\mathbf{Z}})] \right|^2.
    \end{equation} 
    
    Let $f^*$ be the discrimination function that maximizes the mean discrepancy between the two distributions $\mathbb{P}$ and $\mathbb{Q}$, i.e., the function that achieves the supremum in the $\text{MMD}^2$ definition. By Lemma~\ref{lem:A1}, functions in $\mathcal{H}$ can be represented as linear combinations of kernel functions. 
    On the basis of this, by the assumption that $f^*$ lies within the linear span of a finite kernel function set, there exist $\alpha_1, \dots, \alpha_\mathrm{K}$ such that:
    \begin{equation}
        f^*(\cdot) = \sum_{k=1}^\mathrm{K} \alpha_k K(\cdot, \mathbf{Z}_k).
    \end{equation}

    On this basis, we calculate the informativeness score where $f^*$ serves as the balancing function:
    \begin{equation}
        \begin{aligned}
            \delta(f^*) &= \mathbb{E}_{\mathbf{Z}\sim\mathbb{P}}[f^*(\mathbf{Z})] - \mathbb{E}_{\hat{\mathbf{Z}}\sim\mathbb{Q}}[f^*(\hat{\mathbf{Z}})] \\
            &= \mathbb{E}_{\mathbf{Z}\sim\mathbb{P}}\left[\sum_{k=1}^\mathrm{K} \alpha_k K(\mathbf{Z}, \mathbf{Z}_k)\right] - \mathbb{E}_{\hat{\mathbf{Z}}\sim\mathbb{Q}}\left[\sum_{k=1}^\mathrm{K} \alpha_k K(\hat{\mathbf{Z}}, \mathbf{Z}_k)\right].\\
            &\overset{(a)}{=} \sum_{k=1}^\mathrm{K} \alpha_k \left( \mathbb{E}_{\mathbf{Z}\sim\mathbb{P}}[K(\mathbf{Z}, \mathbf{Z}_k)] - \mathbb{E}_{\hat{\mathbf{Z}}\sim\mathbb{Q}}[K(\hat{\mathbf{Z}}, \mathbf{Z}_k)] \right).\\
            &\overset{(b)}{=} 0
        \end{aligned}
    \end{equation}
    where (a) exploits the linearity of the expectation operator to swap the summation and expectation; (b) applies the assumption that $\mathbb{E}_{\mathbf{Z}\in\mathbb{P}}[K(\mathbf{Z}, \mathbf{Z}_k)] = \mathbb{E}_{\hat{\mathbf{Z}}\in\mathbb{Q}}[K(\hat{\mathbf{Z}}, \mathbf{Z}_k)]$ for all $k=1,\dots,\mathrm{K}$.
    
    Since $f^*$ is the function that maximizes the mean discrepancy between $\mathbb{P}$ and $\mathbb{Q}$, the maximum mean discrepancy is zero, \ie $\text{MMD}(\mathbb{P}, \mathbb{Q}) = 0$. Moreover, since $K$ is a universal kernel, $\text{MMD}^2(\mathbb{P}, \mathbb{Q}) = 0$ implies $\mathbb{P} = \mathbb{Q}$~\cite{kmmd}, which completes the proof.
    
\end{proof}

\section{Reproduction Details}\label{sec:reproduce}
\subsection{Dataset descriptions}\label{sec:dataset}

\begin{table}
\caption{Dataset description. }\label{tab:dataset}
\centering
\renewcommand{\multirowsetup}{\centering}
\setlength{\tabcolsep}{10pt}
\small
\begin{threeparttable}
\begin{tabular}{llllll}
    \toprule
    Dataset & D & Forecast length & Train / validation / test & Frequency& Domain \\
    \toprule
     ETTh1 & 7 & 96, 192, 336, 720 & 8545/2881/2881 & Hourly & Electricity\\
     \midrule
     ETTh2 & 7 & 96, 192, 336, 720 & 8545/2881/2881 & Hourly & Electricity\\
     \midrule
     ETTm1 & 7 & 96, 192, 336, 720 & 34465/11521/11521 & 15min & Electricity\\
     \midrule
     ETTm2 & 7 & 96, 192, 336, 720 & 34465/11521/11521 & 15min & Electricity\\
    \midrule
    ECL & 321 & 96, 192, 336, 720 & 18317/2633/5261 & Hourly & Electricity \\
    \midrule
    Weather & 21 & 96, 192, 336, 720 & 36792/5271/10540 & 10min & Weather\\
    \midrule
    M5 & 10 & 8, 12, 20, 28 & 1782/21/21 & Daily & Sale \\
    \bottomrule
\end{tabular}
    \begin{tablenotes}
    \item  \scriptsize \textit{Note}:  \textit{D} denotes the number of variates. \emph{Frequency} denotes the sampling interval of time points. \emph{Train, Validation, Test} denotes the number of samples employed in each split. The taxonomy aligns with~\citep{Timesnet}.
    \end{tablenotes}
\end{threeparttable}
\end{table}

Our empirical evaluation is conducted on a diverse collection of widely-used time series forecasting benchmarks. Each dataset presents distinct characteristics in terms of dimensionality and temporal resolution. A summary is provided in \autoref{tab:dataset}.

\begin{itemize}[leftmargin=*]
    \item \textbf{ETT}~\citep{zhou2021informer}: Contains seven metrics related to electricity transformers, recorded from July 2016 to July 2018. It is divided into four subsets based on sampling frequency: ETTh1 and ETTh2 (hourly), and ETTm1 and ETTm2 (every 15 minutes).
    \item \textbf{ECL}~\citep{Autoformer}: Features the hourly electricity consumption of 321 clients.
    \item \textbf{Weather}~\citep{Autoformer}: Comprises 21 meteorological variables from the Max Planck Biogeochemistry Institute's weather station, captured every 10 minutes throughout 2020.
    \item \textbf{M5}~\citep{M5}: Comprises 3049 individual products from 3 categories and 7 departments, sold in 10 stores in 3 states.
\end{itemize}

Following established protocols~\citep{qiutfb, itransformer}, all datasets are chronologically partitioned into training, validation, and test sets. For the ETT, Weather, and ECL, and datasets, we use a fixed history sequence length of 96 and evaluate performance across four prediction horizons with lengths of 96, 192, 336, and 720. For the M5 datasets, we also use an historical length of 96 but evaluate on shorter prediction horizons of 8, 12, 20, and 28 steps. During the final evaluation on the test set, we ensure that no data is discarded from the last batch: a technique  referred to as the \textit{dropping-last trick} is disabled throughout our experiments.

\subsection{Implementation details of model training}
To ensure a rigorous and fair comparison, we strictly adhere to standard evaluation protocols established in recent literature~\citep{Timesnet,itransformer}. We reproduce all baseline methods using their official implementations, primarily integrating codes from the DistDF~\citep{wang2026iclrdistdf} and CFPT~\citep{CFPT} repositories. All models are optimized using the Adam optimizer~\citep{Adam} with a fixed batch size of 32. Following~\citet{qiutfb}, we disable the ``drop-last'' operation in data loaders during the testing phase to prevent data leakage and ensure metrics are calculated over the complete test set. To prevent overfitting, we employ an early stopping mechanism that terminates training if the validation loss fails to improve for 15 consecutive epochs. The initial learning rate is tuned for each dataset-model combination via a grid search over $\{5 \times 10^{-3}, 2 \times 10^{-3}, 10^{-3}, 5 \times 10^{-4}, 2 \times 10^{-4}, 10^{-4}, 5 \times 10^{-5}\}$, selecting the value that yields the minimal MSE on the validation set. All experiments are conducted on a computational cluster equipped with Intel(R) Xeon(R) Platinum 8463B CPUs and NVIDIA RTX H800 GPUs.

When integrating KMB-DF with forecast backbones (e.g., CFPT, TimeBridge), we explicitly retain the original models' architectural hyperparameters as reported in their respective benchmarks to ensure that performance gains are attributed solely to the proposed objective. Consequently, our hyperparameter tuning is exclusively focused on the components of KMB-DF: the penalty strength $\alpha$, the tolerated imbalance margin $\mathrm{C}$, and the number of balancing functions $\mathrm{K}$. Specifically, $\alpha$ is tuned within the interval $(0, 1]$ to balance the trade-off between the standard forecasting loss and the moment matching penalty. The margin $\mathrm{C}$ is selected from the set $\{0.0005, 0.001, 0.005, 0.01, 0.05\}$ to control the strictness of distribution alignment, while the number of balancing functions $\mathrm{K}$ is searched within the integer range $[1, 6]$. We utilize the exponential kernel as the default instantiation for the balancing functions.

\section{More Experimental Results}\label{sec:results_app}

\subsection{Overall performance}\label{sec:overall_app}
Additional experimental results of overall performance are available in  \autoref{tab:multistep_app_full}, where the performance given different $\T$ is reported.

\begin{table}[ht]
\caption{Full results on the multi-step forecasting task. The length of history window is set to 96 for all baselines. \texttt{Avg} indicates the results averaged over forecasting lengths: T=96, 192, 336 and 720 for ETT, ECL, and Weather; T=8, 12, 20 and 28 for M5.}\label{tab:multistep_app_full}
\renewcommand{\arraystretch}{0.8}
\setlength{\tabcolsep}{2pt}
\small
\centering
\renewcommand{\multirowsetup}{\centering}
\begin{tabular}{c|c|cc|cc|cc|cc|cc|cc|cc|cc|cc|cc}
    \toprule
    \multicolumn{2}{l}{\multirow{2}{*}{\rotatebox{0}{\scaleb{Loss}}}} & 
    \multicolumn{2}{c}{\rotatebox{0}{\scaleb{\textbf{KMB-DF}}}} &
    \multicolumn{2}{c}{\rotatebox{0}{\scaleb{QDF}}} &
    \multicolumn{2}{c}{\rotatebox{0}{\scaleb{DistDF}}} &
    \multicolumn{2}{c}{\rotatebox{0}{\scaleb{Time-o1}}} &
    \multicolumn{2}{c}{\rotatebox{0}{\scaleb{FreDF}}} &
    \multicolumn{2}{c}{\rotatebox{0}{\scaleb{Koopman}}} &
    \multicolumn{2}{c}{\rotatebox{0}{\scaleb{GDTW}}} &
    \multicolumn{2}{c}{\rotatebox{0}{\scaleb{Dilate}}} &
    \multicolumn{2}{c}{\rotatebox{0}{\scaleb{Soft-DTW}}} &
    \multicolumn{2}{c}{\rotatebox{0}{\scaleb{MSE}}} \\
    \multicolumn{2}{c}{} &
    \multicolumn{2}{c}{\scaleb{\textbf{(Ours)}}} & 
    \multicolumn{2}{c}{\scaleb{(2025)}} & 
    \multicolumn{2}{c}{\scaleb{(2025)}} & 
    \multicolumn{2}{c}{\scaleb{(2025)}} & 
    \multicolumn{2}{c}{\scaleb{(2025)}} & 
    \multicolumn{2}{c}{\scaleb{(2021)}} & 
    \multicolumn{2}{c}{\scaleb{(2021)}} &
    \multicolumn{2}{c}{\scaleb{(2019)}} &
    \multicolumn{2}{c}{\scaleb{(2017)}} &
    \multicolumn{2}{c}{\scaleb{(2002)}} \\
    \cmidrule(lr){3-4} \cmidrule(lr){5-6}\cmidrule(lr){7-8} \cmidrule(lr){9-10}\cmidrule(lr){11-12} \cmidrule(lr){13-14} \cmidrule(lr){15-16} \cmidrule(lr){17-18} \cmidrule(lr){19-20} \cmidrule(lr){21-22}
    \multicolumn{2}{l}{\rotatebox{0}{\scaleb{Metrics}}}  & \scalea{MSE} & \scalea{MAE}  & \scalea{MSE} & \scalea{MAE}  & \scalea{MSE} & \scalea{MAE}  & \scalea{MSE} & \scalea{MAE}  & \scalea{MSE} & \scalea{MAE}  & \scalea{MSE} & \scalea{MAE} & \scalea{MSE} & \scalea{MAE} & \scalea{MSE} & \scalea{MAE} & \scalea{MSE} & \scalea{MAE} & \scalea{MSE} & \scalea{MAE} \\
    \toprule
    
\multirow{5}{*}{{\rotatebox{90}{\scalebox{0.95}{ETTm1}}}}
& 96 & \scalea{\bst{0.315}} & \scalea{\bst{0.352}} & \scalea{\subbst{0.318}} & \scalea{0.359} & \scalea{0.318} & \scalea{0.355} & \scalea{0.321} & \scalea{\subbst{0.353}} & \scalea{0.320} & \scalea{0.354} & \scalea{0.326} & \scalea{0.363} & \scalea{0.324} & \scalea{0.362} & \scalea{0.323} & \scalea{0.361} & \scalea{0.322} & \scalea{0.359} & \scalea{0.321} & \scalea{0.358}  \\
& 192 & \scalea{\bst{0.351}} & \scalea{\bst{0.379}} & \scalea{\subbst{0.355}} & \scalea{0.379} & \scalea{0.356} & \scalea{0.380} & \scalea{0.358} & \scalea{0.380} & \scalea{0.356} & \scalea{\subbst{0.379}} & \scalea{0.359} & \scalea{0.383} & \scalea{0.368} & \scalea{0.392} & \scalea{0.363} & \scalea{0.385} & \scalea{0.366} & \scalea{0.385} & \scalea{0.357} & \scalea{0.382}  \\
& 336 & \scalea{\bst{0.381}} & \scalea{\bst{0.400}} & \scalea{0.384} & \scalea{0.401} & \scalea{0.383} & \scalea{0.401} & \scalea{0.386} & \scalea{0.403} & \scalea{0.386} & \scalea{0.401} & \scalea{\subbst{0.383}} & \scalea{\subbst{0.401}} & \scalea{0.405} & \scalea{0.421} & \scalea{0.384} & \scalea{0.401} & \scalea{0.399} & \scalea{0.409} & \scalea{0.387} & \scalea{0.401}  \\
& 720 & \scalea{\bst{0.442}} & \scalea{\bst{0.433}} & \scalea{\subbst{0.443}} & \scalea{0.435} & \scalea{0.444} & \scalea{0.434} & \scalea{0.447} & \scalea{0.436} & \scalea{0.444} & \scalea{\subbst{0.434}} & \scalea{0.446} & \scalea{0.436} & \scalea{0.496} & \scalea{0.473} & \scalea{0.446} & \scalea{0.437} & \scalea{0.490} & \scalea{0.461} & \scalea{0.446} & \scalea{0.435}  \\
\cmidrule(lr){2-22}
& Avg & \scalea{\bst{0.372}} & \scalea{\bst{0.391}} & \scalea{\subbst{0.375}} & \scalea{0.393} & \scalea{0.375} & \scalea{0.393} & \scalea{0.378} & \scalea{0.393} & \scalea{0.376} & \scalea{\subbst{0.392}} & \scalea{0.378} & \scalea{0.395} & \scalea{0.398} & \scalea{0.412} & \scalea{0.379} & \scalea{0.396} & \scalea{0.394} & \scalea{0.403} & \scalea{0.378} & \scalea{0.394}  \\
\midrule

\multirow{5}{*}{{\rotatebox{90}{\scalebox{0.95}{ETTm2}}}}
& 96 & \scalea{\bst{0.166}} & \scalea{\bst{0.249}} & \scalea{\subbst{0.168}} & \scalea{\subbst{0.250}} & \scalea{0.168} & \scalea{0.251} & \scalea{0.173} & \scalea{0.251} & \scalea{0.174} & \scalea{0.251} & \scalea{0.182} & \scalea{0.262} & \scalea{0.176} & \scalea{0.257} & \scalea{0.172} & \scalea{0.253} & \scalea{0.176} & \scalea{0.256} & \scalea{0.175} & \scalea{0.256}  \\
& 192 & \scalea{\bst{0.229}} & \scalea{\subbst{0.291}} & \scalea{\subbst{0.232}} & \scalea{0.292} & \scalea{0.232} & \scalea{0.293} & \scalea{0.233} & \scalea{0.291} & \scalea{0.233} & \scalea{\bst{0.290}} & \scalea{0.235} & \scalea{0.294} & \scalea{0.243} & \scalea{0.305} & \scalea{0.236} & \scalea{0.296} & \scalea{0.243} & \scalea{0.299} & \scalea{0.234} & \scalea{0.295}  \\
& 336 & \scalea{\subbst{0.289}} & \scalea{0.330} & \scalea{\bst{0.288}} & \scalea{\subbst{0.329}} & \scalea{0.289} & \scalea{0.330} & \scalea{0.290} & \scalea{0.329} & \scalea{0.290} & \scalea{\bst{0.327}} & \scalea{0.291} & \scalea{0.331} & \scalea{0.303} & \scalea{0.344} & \scalea{0.290} & \scalea{0.331} & \scalea{0.306} & \scalea{0.342} & \scalea{0.289} & \scalea{0.330}  \\
& 720 & \scalea{\bst{0.384}} & \scalea{0.388} & \scalea{0.386} & \scalea{0.387} & \scalea{0.385} & \scalea{0.387} & \scalea{0.386} & \scalea{\bst{0.385}} & \scalea{0.386} & \scalea{\subbst{0.386}} & \scalea{0.388} & \scalea{0.390} & \scalea{0.434} & \scalea{0.419} & \scalea{0.393} & \scalea{0.390} & \scalea{0.438} & \scalea{0.422} & \scalea{\subbst{0.385}} & \scalea{0.389}  \\
\cmidrule(lr){2-22}
& Avg & \scalea{\bst{0.267}} & \scalea{0.315} & \scalea{\subbst{0.268}} & \scalea{0.315} & \scalea{0.269} & \scalea{0.315} & \scalea{0.271} & \scalea{\subbst{0.314}} & \scalea{0.271} & \scalea{\bst{0.313}} & \scalea{0.274} & \scalea{0.319} & \scalea{0.289} & \scalea{0.332} & \scalea{0.273} & \scalea{0.317} & \scalea{0.291} & \scalea{0.330} & \scalea{0.271} & \scalea{0.317}  \\
\midrule

\multirow{5}{*}{{\rotatebox{90}{\scalebox{0.95}{ETTh1}}}}
& 96 & \scalea{\subbst{0.372}} & \scalea{\subbst{0.389}} & \scalea{0.373} & \scalea{0.392} & \scalea{0.373} & \scalea{0.391} & \scalea{\bst{0.369}} & \scalea{0.392} & \scalea{0.373} & \scalea{\bst{0.389}} & \scalea{0.374} & \scalea{0.393} & \scalea{0.377} & \scalea{0.395} & \scalea{0.376} & \scalea{0.394} & \scalea{0.377} & \scalea{0.394} & \scalea{0.373} & \scalea{0.391}  \\
& 192 & \scalea{\bst{0.423}} & \scalea{0.420} & \scalea{0.426} & \scalea{0.421} & \scalea{0.426} & \scalea{\subbst{0.420}} & \scalea{\subbst{0.424}} & \scalea{0.424} & \scalea{0.424} & \scalea{\bst{0.418}} & \scalea{0.426} & \scalea{0.424} & \scalea{0.434} & \scalea{0.427} & \scalea{0.430} & \scalea{0.423} & \scalea{0.436} & \scalea{0.427} & \scalea{0.427} & \scalea{0.421}  \\
& 336 & \scalea{\bst{0.460}} & \scalea{\bst{0.438}} & \scalea{0.464} & \scalea{0.441} & \scalea{0.468} & \scalea{0.441} & \scalea{\subbst{0.464}} & \scalea{0.439} & \scalea{0.465} & \scalea{\subbst{0.439}} & \scalea{0.466} & \scalea{0.440} & \scalea{0.486} & \scalea{0.457} & \scalea{0.470} & \scalea{0.442} & \scalea{0.479} & \scalea{0.451} & \scalea{0.466} & \scalea{0.441}  \\
& 720 & \scalea{\bst{0.447}} & \scalea{\bst{0.455}} & \scalea{0.473} & \scalea{0.464} & \scalea{0.467} & \scalea{0.462} & \scalea{\subbst{0.462}} & \scalea{\subbst{0.460}} & \scalea{0.472} & \scalea{0.468} & \scalea{0.483} & \scalea{0.469} & \scalea{0.512} & \scalea{0.497} & \scalea{0.489} & \scalea{0.478} & \scalea{0.551} & \scalea{0.510} & \scalea{0.477} & \scalea{0.468}  \\
\cmidrule(lr){2-22}
& Avg & \scalea{\bst{0.426}} & \scalea{\bst{0.426}} & \scalea{0.434} & \scalea{0.429} & \scalea{0.434} & \scalea{0.428} & \scalea{\subbst{0.430}} & \scalea{0.429} & \scalea{0.434} & \scalea{\subbst{0.428}} & \scalea{0.437} & \scalea{0.431} & \scalea{0.452} & \scalea{0.444} & \scalea{0.441} & \scalea{0.434} & \scalea{0.461} & \scalea{0.445} & \scalea{0.436} & \scalea{0.430}  \\
\midrule

\multirow{5}{*}{{\rotatebox{90}{\scalebox{0.95}{ETTh2}}}}
& 96 & \scalea{\bst{0.285}} & \scalea{0.337} & \scalea{0.288} & \scalea{0.337} & \scalea{0.287} & \scalea{0.337} & \scalea{\subbst{0.286}} & \scalea{\bst{0.336}} & \scalea{0.289} & \scalea{\subbst{0.337}} & \scalea{0.287} & \scalea{0.338} & \scalea{0.289} & \scalea{0.340} & \scalea{0.287} & \scalea{0.338} & \scalea{0.292} & \scalea{0.342} & \scalea{0.287} & \scalea{0.337}  \\
& 192 & \scalea{\bst{0.363}} & \scalea{0.387} & \scalea{0.367} & \scalea{0.390} & \scalea{\subbst{0.364}} & \scalea{0.391} & \scalea{0.367} & \scalea{\subbst{0.384}} & \scalea{0.365} & \scalea{\bst{0.382}} & \scalea{0.365} & \scalea{0.390} & \scalea{0.377} & \scalea{0.399} & \scalea{0.368} & \scalea{0.389} & \scalea{0.380} & \scalea{0.398} & \scalea{0.368} & \scalea{0.390}  \\
& 336 & \scalea{\bst{0.411}} & \scalea{\bst{0.425}} & \scalea{0.412} & \scalea{0.427} & \scalea{\subbst{0.412}} & \scalea{\subbst{0.426}} & \scalea{0.414} & \scalea{0.427} & \scalea{0.414} & \scalea{0.427} & \scalea{0.413} & \scalea{0.429} & \scalea{0.422} & \scalea{0.432} & \scalea{0.413} & \scalea{0.427} & \scalea{0.455} & \scalea{0.443} & \scalea{0.416} & \scalea{0.428}  \\
& 720 & \scalea{\bst{0.396}} & \scalea{\bst{0.426}} & \scalea{0.401} & \scalea{0.428} & \scalea{\subbst{0.398}} & \scalea{\subbst{0.427}} & \scalea{0.402} & \scalea{0.428} & \scalea{0.406} & \scalea{0.430} & \scalea{0.405} & \scalea{0.430} & \scalea{0.457} & \scalea{0.463} & \scalea{0.415} & \scalea{0.437} & \scalea{0.445} & \scalea{0.453} & \scalea{0.415} & \scalea{0.436}  \\
\cmidrule(lr){2-22}
& Avg & \scalea{\bst{0.364}} & \scalea{\bst{0.394}} & \scalea{0.367} & \scalea{0.396} & \scalea{\subbst{0.365}} & \scalea{0.395} & \scalea{0.367} & \scalea{\subbst{0.394}} & \scalea{0.368} & \scalea{0.394} & \scalea{0.368} & \scalea{0.397} & \scalea{0.386} & \scalea{0.409} & \scalea{0.371} & \scalea{0.398} & \scalea{0.393} & \scalea{0.409} & \scalea{0.372} & \scalea{0.398}  \\
\midrule

\multirow{5}{*}{{\rotatebox{90}{\scalebox{0.95}{ECL}}}}
& 96 & \scalea{\bst{0.136}} & \scalea{\bst{0.231}} & \scalea{0.137} & \scalea{0.232} & \scalea{0.137} & \scalea{0.232} & \scalea{0.141} & \scalea{0.235} & \scalea{0.139} & \scalea{0.232} & \scalea{0.138} & \scalea{0.232} & \scalea{0.882} & \scalea{0.777} & \scalea{0.137} & \scalea{0.232} & \scalea{\subbst{0.136}} & \scalea{\subbst{0.232}} & \scalea{0.137} & \scalea{0.232}  \\
& 192 & \scalea{\bst{0.152}} & \scalea{\bst{0.245}} & \scalea{0.153} & \scalea{0.246} & \scalea{\subbst{0.152}} & \scalea{\subbst{0.246}} & \scalea{0.154} & \scalea{0.246} & \scalea{0.156} & \scalea{0.246} & \scalea{0.153} & \scalea{0.247} & \scalea{0.853} & \scalea{0.762} & \scalea{0.154} & \scalea{0.247} & \scalea{0.154} & \scalea{0.247} & \scalea{0.153} & \scalea{0.247}  \\
& 336 & \scalea{\bst{0.168}} & \scalea{0.265} & \scalea{0.168} & \scalea{0.264} & \scalea{0.168} & \scalea{0.265} & \scalea{\subbst{0.168}} & \scalea{\subbst{0.263}} & \scalea{0.169} & \scalea{\bst{0.263}} & \scalea{0.169} & \scalea{0.267} & \scalea{0.809} & \scalea{0.711} & \scalea{0.169} & \scalea{0.267} & \scalea{0.169} & \scalea{0.267} & \scalea{0.168} & \scalea{0.266}  \\
& 720 & \scalea{\bst{0.198}} & \scalea{0.293} & \scalea{0.199} & \scalea{0.293} & \scalea{0.201} & \scalea{0.294} & \scalea{\subbst{0.198}} & \scalea{\subbst{0.291}} & \scalea{0.199} & \scalea{\bst{0.290}} & \scalea{0.201} & \scalea{0.296} & \scalea{0.900} & \scalea{0.769} & \scalea{0.202} & \scalea{0.297} & \scalea{0.203} & \scalea{0.298} & \scalea{0.199} & \scalea{0.294}  \\
\cmidrule(lr){2-22}
& Avg & \scalea{\bst{0.163}} & \scalea{\subbst{0.258}} & \scalea{\subbst{0.164}} & \scalea{0.259} & \scalea{0.164} & \scalea{0.259} & \scalea{0.165} & \scalea{0.259} & \scalea{0.166} & \scalea{\bst{0.258}} & \scalea{0.165} & \scalea{0.260} & \scalea{0.861} & \scalea{0.755} & \scalea{0.165} & \scalea{0.260} & \scalea{0.165} & \scalea{0.261} & \scalea{0.165} & \scalea{0.260}  \\
\midrule

\multirow{5}{*}{{\rotatebox{90}{\scalebox{0.95}{Weather}}}}
& 96 & \scalea{\bst{0.152}} & \scalea{\bst{0.196}} & \scalea{0.155} & \scalea{0.200} & \scalea{0.155} & \scalea{0.200} & \scalea{\subbst{0.153}} & \scalea{\subbst{0.197}} & \scalea{0.154} & \scalea{0.197} & \scalea{0.173} & \scalea{0.215} & \scalea{0.158} & \scalea{0.206} & \scalea{0.173} & \scalea{0.215} & \scalea{0.177} & \scalea{0.220} & \scalea{0.156} & \scalea{0.201}  \\
& 192 & \scalea{\bst{0.204}} & \scalea{\bst{0.242}} & \scalea{0.205} & \scalea{0.244} & \scalea{0.205} & \scalea{0.245} & \scalea{0.205} & \scalea{0.243} & \scalea{\subbst{0.204}} & \scalea{\subbst{0.242}} & \scalea{0.205} & \scalea{0.245} & \scalea{0.209} & \scalea{0.250} & \scalea{0.205} & \scalea{0.246} & \scalea{0.209} & \scalea{0.245} & \scalea{0.205} & \scalea{0.245}  \\
& 336 & \scalea{\bst{0.260}} & \scalea{\bst{0.284}} & \scalea{0.261} & \scalea{0.285} & \scalea{0.261} & \scalea{0.286} & \scalea{0.261} & \scalea{\subbst{0.284}} & \scalea{\subbst{0.261}} & \scalea{0.284} & \scalea{0.262} & \scalea{0.286} & \scalea{0.270} & \scalea{0.296} & \scalea{0.262} & \scalea{0.287} & \scalea{0.278} & \scalea{0.296} & \scalea{0.261} & \scalea{0.286}  \\
& 720 & \scalea{\bst{0.339}} & \scalea{\bst{0.337}} & \scalea{0.343} & \scalea{0.338} & \scalea{0.343} & \scalea{0.341} & \scalea{0.343} & \scalea{0.340} & \scalea{\subbst{0.341}} & \scalea{\subbst{0.338}} & \scalea{0.345} & \scalea{0.342} & \scalea{0.354} & \scalea{0.352} & \scalea{0.345} & \scalea{0.342} & \scalea{0.385} & \scalea{0.363} & \scalea{0.343} & \scalea{0.339}  \\
\cmidrule(lr){2-22}
& Avg & \scalea{\bst{0.239}} & \scalea{\bst{0.265}} & \scalea{0.241} & \scalea{0.267} & \scalea{0.241} & \scalea{0.268} & \scalea{0.240} & \scalea{0.266} & \scalea{\subbst{0.240}} & \scalea{\subbst{0.265}} & \scalea{0.246} & \scalea{0.272} & \scalea{0.248} & \scalea{0.276} & \scalea{0.246} & \scalea{0.273} & \scalea{0.262} & \scalea{0.281} & \scalea{0.241} & \scalea{0.267}  \\
\midrule

\multirow{5}{*}{{\rotatebox{90}{\scalebox{0.95}{M5}}}}
& 8 & \scalea{\bst{0.150}} & \scalea{\bst{0.307}} & \scalea{0.152} & \scalea{0.309} & \scalea{0.152} & \scalea{0.308} & \scalea{0.159} & \scalea{0.318} & \scalea{\subbst{0.151}} & \scalea{\subbst{0.308}} & \scalea{0.154} & \scalea{0.311} & \scalea{0.153} & \scalea{0.312} & \scalea{0.154} & \scalea{0.312} & \scalea{0.154} & \scalea{0.312} & \scalea{0.151} & \scalea{0.309}  \\
& 12 & \scalea{\bst{0.151}} & \scalea{\bst{0.311}} & \scalea{\subbst{0.152}} & \scalea{0.311} & \scalea{0.154} & \scalea{\subbst{0.311}} & \scalea{0.164} & \scalea{0.325} & \scalea{0.155} & \scalea{0.313} & \scalea{0.156} & \scalea{0.314} & \scalea{0.160} & \scalea{0.319} & \scalea{0.161} & \scalea{0.319} & \scalea{0.160} & \scalea{0.319} & \scalea{0.157} & \scalea{0.315}  \\
& 20 & \scalea{\bst{0.153}} & \scalea{\bst{0.311}} & \scalea{0.155} & \scalea{\subbst{0.313}} & \scalea{\subbst{0.154}} & \scalea{0.313} & \scalea{0.159} & \scalea{0.317} & \scalea{0.166} & \scalea{0.328} & \scalea{0.158} & \scalea{0.318} & \scalea{0.163} & \scalea{0.321} & \scalea{0.165} & \scalea{0.326} & \scalea{0.164} & \scalea{0.326} & \scalea{0.165} & \scalea{0.324}  \\
& 28 & \scalea{\bst{0.154}} & \scalea{\bst{0.309}} & \scalea{0.165} & \scalea{0.320} & \scalea{0.162} & \scalea{0.322} & \scalea{0.162} & \scalea{0.316} & \scalea{0.199} & \scalea{0.361} & \scalea{0.164} & \scalea{0.323} & \scalea{0.164} & \scalea{0.319} & \scalea{\subbst{0.156}} & \scalea{\subbst{0.311}} & \scalea{0.169} & \scalea{0.332} & \scalea{0.199} & \scalea{0.361}  \\
\cmidrule(lr){2-22}
& Avg & \scalea{\bst{0.152}} & \scalea{\bst{0.309}} & \scalea{0.156} & \scalea{\subbst{0.313}} & \scalea{\subbst{0.156}} & \scalea{0.313} & \scalea{0.161} & \scalea{0.319} & \scalea{0.168} & \scalea{0.327} & \scalea{0.158} & \scalea{0.317} & \scalea{0.160} & \scalea{0.318} & \scalea{0.159} & \scalea{0.317} & \scalea{0.162} & \scalea{0.322} & \scalea{0.168} & \scalea{0.327}  \\
\midrule

\multicolumn{2}{c|}{\scalea{{$1^{\text{st}}$ Count}}} & \scalea{\bst{33}} & \scalea{\bst{24}} & \scalea{\bst{1}} & \scalea{0} & \scalea{0} & \scalea{0} & \scalea{\bst{1}} & \scalea{\bst{2}} & \scalea{0} & \scalea{\bst{9}} & \scalea{0} & \scalea{0} & \scalea{0} & \scalea{0} & \scalea{0} & \scalea{0} & \scalea{0} & \scalea{0} & \scalea{0} & \scalea{0} \\
    \bottomrule
\end{tabular}
\end{table}

\subsection{Showcase}
Additional experimental results of showcases are available in \autoref{fig:pred_app}, where two datasets are involved.

\begin{figure}
\begin{center}
\subfigure[ETTm1 snapshot 1.]{\includegraphics[width=0.24\linewidth]{fig/pred/ETTm1_192_2391_w_KMB-DF.pdf}
\includegraphics[width=0.24\linewidth]{fig/pred/ETTm1_192_2391_wo_KMB-DF.pdf}
}
\subfigure[ETTh2 snapshot 1.]{\includegraphics[width=0.24\linewidth]{fig/pred/ETTh2_192_2007_w_KMB-DF.pdf}
\includegraphics[width=0.24\linewidth]{fig/pred/ETTh2_192_2007_wo_KMB-DF.pdf}
}

\subfigure[ETTm1 snapshot 2.]{\includegraphics[width=0.24\linewidth]{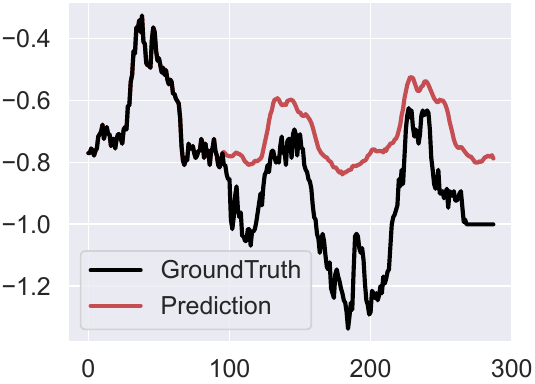}
\includegraphics[width=0.24\linewidth]{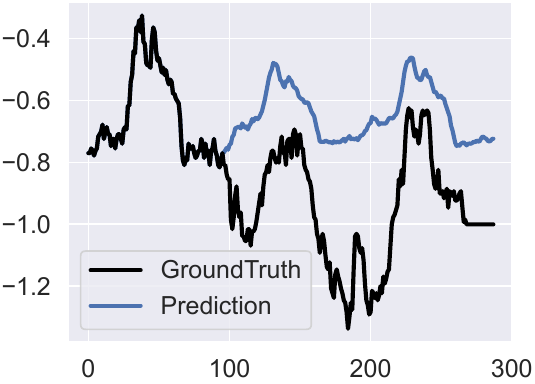}
}
\subfigure[ETTh2 snapshot 2.]{\includegraphics[width=0.24\linewidth]{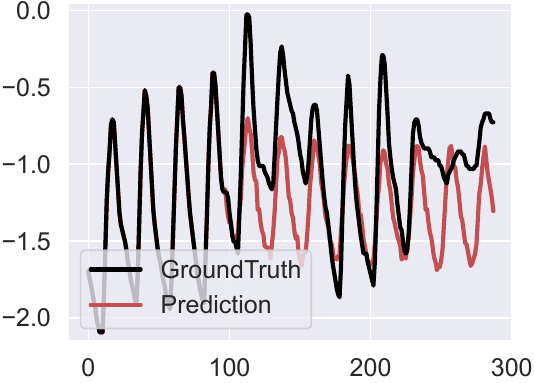}
\includegraphics[width=0.24\linewidth]{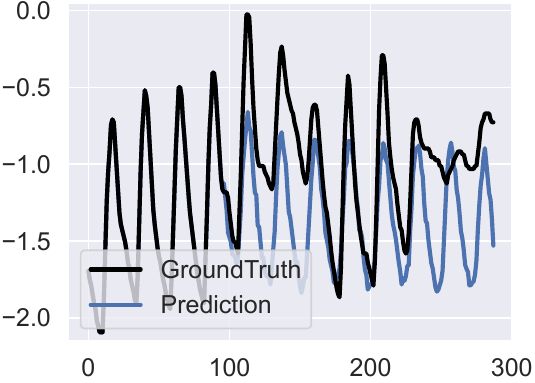}
}

\subfigure[ETTm1 snapshot 3.]{\includegraphics[width=0.24\linewidth]{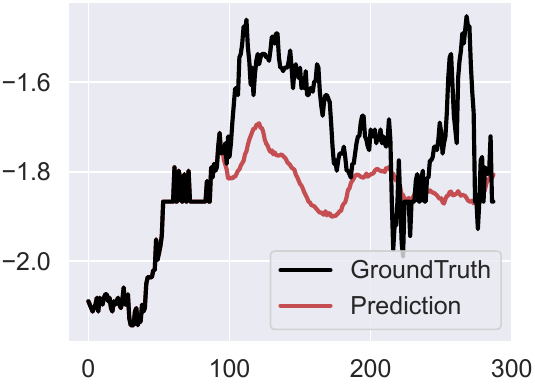}
\includegraphics[width=0.24\linewidth]{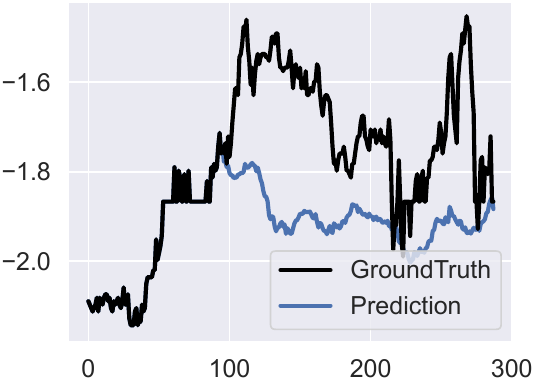}
}
\subfigure[ETTh2 snapshot 3.]{\includegraphics[width=0.24\linewidth]{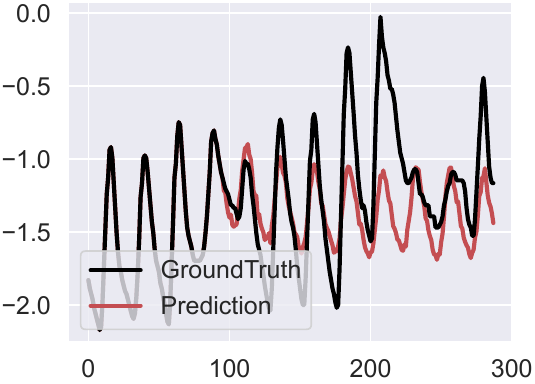}
\includegraphics[width=0.24\linewidth]{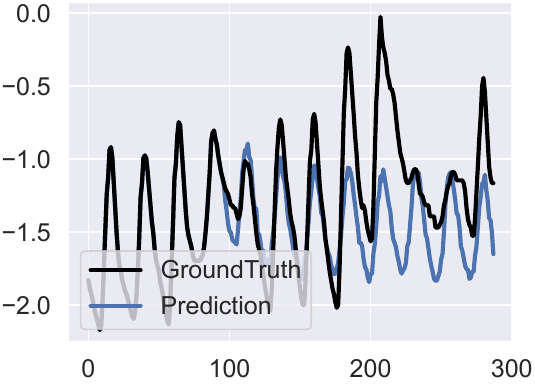}
}
\caption{The forecast sequences generated with DF and KMB-DF. The forecast length is set to 192 and the experiment is conducted on ETTm1 and ETTh2.}
\label{fig:pred_app}
\end{center}
\end{figure}

\subsection{Generalization studies}\label{sec:generalize_app}
Additional experimental results of varying forecast models are available in \autoref{fig:backbone_app}, where four forecast models are involved on four datasets.

\begin{figure}[t]
\begin{center}
\includegraphics[width=0.245\linewidth]{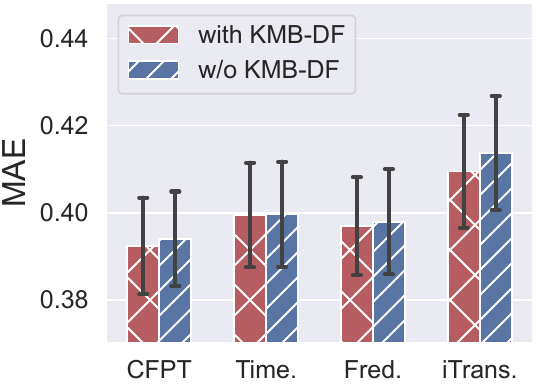}
\includegraphics[width=0.245\linewidth]{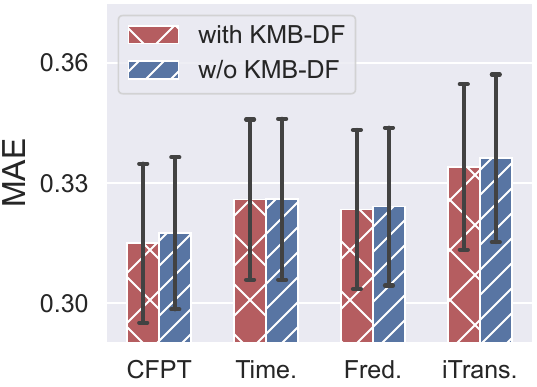}
\includegraphics[width=0.245\linewidth]{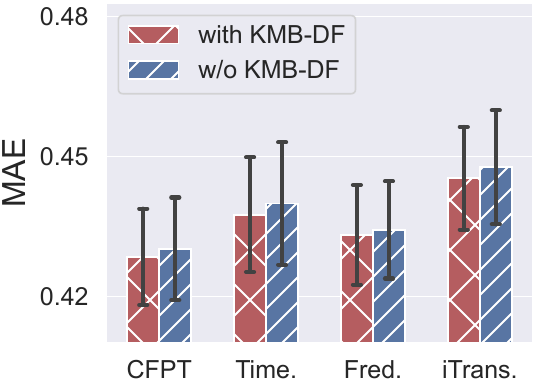}
\includegraphics[width=0.245\linewidth]{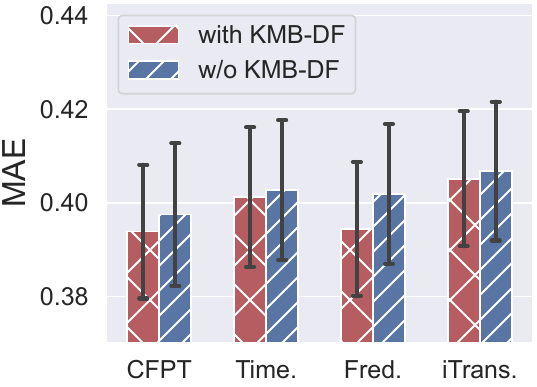}
\subfigure[ETTm1]{\includegraphics[width=0.245\linewidth]{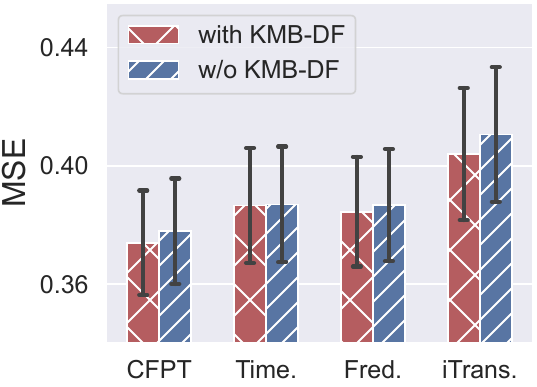}}
\subfigure[ETTm2]{\includegraphics[width=0.245\linewidth]{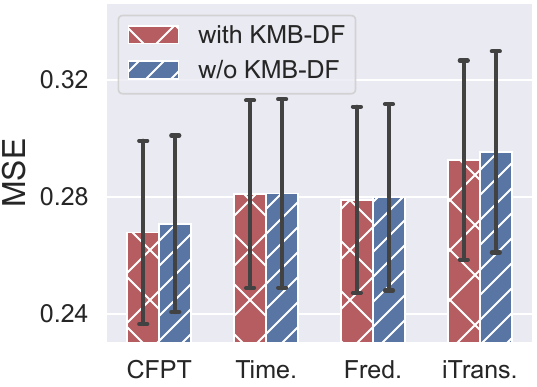}}
\subfigure[ETTh1]{\includegraphics[width=0.245\linewidth]{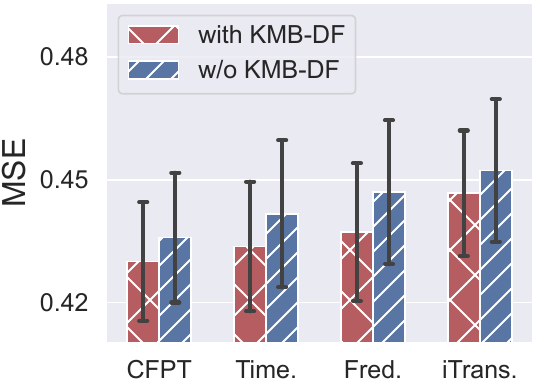}}
\subfigure[ETTh2]{\includegraphics[width=0.245\linewidth]{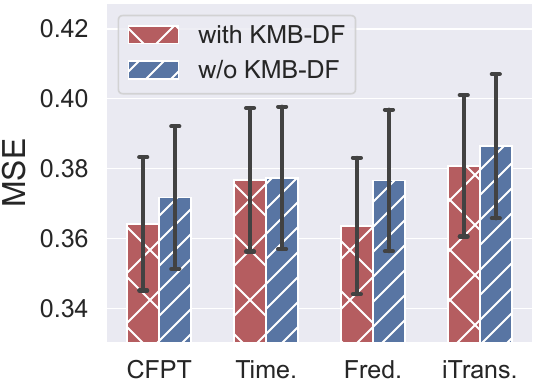}}
\caption{Performance of different forecast models with and without KMB-DF. The forecast errors are averaged over forecast lengths and the error bars represent 50\% confidence intervals.}
\label{fig:backbone_app}
\end{center}
\end{figure}

\subsection{Case study with PatchTST of varying historical lengths}
Additional experimental results of varying historical lengths are available in \autoref{tab:vary_seq_len}, complementing the fixed length of 96 used in the main text. The forecast models selected include CFPT~\citep{CFPT} which is the recent state-of-the-art forecast model, and PatchTST~\citep{PatchTST} which is known to require large historical lengths. The results demonstrate that KMB-DF consistently improves both forecast models across different history sequence lengths. 

\begin{table}
\centering
\caption{Varying input sequence length results on the Weather dataset.}\label{tab:vary_seq_len}
\renewcommand{\arraystretch}{1}
\setlength{\tabcolsep}{10pt} \scriptsize
\centering
\renewcommand{\multirowsetup}{\centering}
\begin{tabular}{c|c|c|cc|cc|cc|cc}
    \toprule
    \multicolumn{3}{c|}{\rotatebox{0}{Models}} & \multicolumn{2}{c}{\textbf{KMB-DF}} & \multicolumn{2}{c|}{CFPT} & \multicolumn{2}{c}{\textbf{KMB-DF}} & \multicolumn{2}{c}{PatchTST} \\
    \cmidrule(lr){4-5} \cmidrule(lr){6-7} \cmidrule(lr){8-9} \cmidrule(lr){10-11}
    \multicolumn{3}{c|}{\rotatebox{0}{Metrics}} & MSE & MAE & MSE & MAE & MSE & MAE & MSE & MAE \\
    \midrule
    \multirow{20}{*}{\rotatebox{90}{Historical sequence length}} 

& \multirow{5}{*}{96}
& 96 & 0.152 & 0.196 & 0.156 & 0.201 & 0.178 & 0.219 & 0.189 & 0.230 \\
&& 192 & 0.204 & 0.242 & 0.205 & 0.245 & 0.224 & 0.259 & 0.228 & 0.262 \\
&& 336 & 0.260 & 0.284 & 0.261 & 0.286 & 0.280 & 0.299 & 0.288 & 0.305 \\
&& 720 & 0.339 & 0.337 & 0.343 & 0.339 & 0.356 & 0.348 & 0.362 & 0.354 \\
\cmidrule(lr){3-11}
&& Avg & 0.239 & 0.265 & 0.241 & 0.267 & 0.260 & 0.281 & 0.267 & 0.288 \\
\cmidrule(lr){2-11}

& \multirow{5}{*}{192}
& 96 & 0.148 & 0.196 & 0.150 & 0.198 & 0.159 & 0.204 & 0.163 & 0.209 \\
&& 192 & 0.193 & 0.238 & 0.195 & 0.239 & 0.206 & 0.247 & 0.207 & 0.249 \\
&& 336 & 0.247 & 0.278 & 0.250 & 0.282 & 0.260 & 0.289 & 0.268 & 0.293 \\
&& 720 & 0.325 & 0.331 & 0.330 & 0.333 & 0.335 & 0.339 & 0.338 & 0.339 \\
\cmidrule(lr){3-11}
&& Avg & 0.228 & 0.261 & 0.231 & 0.263 & 0.240 & 0.270 & 0.244 & 0.273 \\
\cmidrule(lr){2-11}

& \multirow{5}{*}{336}
& 96 & 0.143 & 0.192 & 0.145 & 0.195 & 0.153 & 0.202 & 0.158 & 0.208 \\
&& 192 & 0.188 & 0.235 & 0.188 & 0.235 & 0.195 & 0.242 & 0.235 & 0.291 \\
&& 336 & 0.240 & 0.275 & 0.240 & 0.277 & 0.249 & 0.283 & 0.252 & 0.287 \\
&& 720 & 0.317 & 0.328 & 0.321 & 0.332 & 0.323 & 0.337 & 0.326 & 0.336 \\
\cmidrule(lr){3-11}
&& Avg & 0.222 & 0.258 & 0.224 & 0.260 & 0.230 & 0.266 & 0.243 & 0.280 \\
\cmidrule(lr){2-11}

& \multirow{5}{*}{720}
& 96 & 0.144 & 0.196 & 0.148 & 0.200 & 0.150 & 0.201 & 0.153 & 0.205 \\
&& 192 & 0.190 & 0.239 & 0.189 & 0.240 & 0.196 & 0.245 & 0.205 & 0.254 \\
&& 336 & 0.239 & 0.279 & 0.238 & 0.279 & 0.244 & 0.285 & 0.248 & 0.288 \\
&& 720 & 0.307 & 0.327 & 0.310 & 0.329 & 0.313 & 0.335 & 0.317 & 0.339 \\
\cmidrule(lr){3-11}
&& Avg & 0.220 & 0.260 & 0.221 & 0.262 & 0.226 & 0.266 & 0.231 & 0.272 \\

    \bottomrule
\end{tabular}
\end{table}

\subsection{Random Seed Sensitivity}
Additional experimental results of random seed sensitivity are available in  \autoref{tab:seed}, where we report the mean and standard deviation of results obtained from experiments conducted with five different random seeds (2022, 2023, 2024, 2025, and 2026). The results indicate minimal sensitivity of the proposed method to random initialization, as most averaged standard deviations remain below 0.005.

\begin{table}
\centering
\caption{Experimental results ($\mathrm{mean}_{\pm\mathrm{std}}$) with varying seeds (2022-2026).}\label{tab:seed}
\renewcommand{\arraystretch}{1.2}
\setlength{\tabcolsep}{4pt}
\scriptsize
\centering
\renewcommand{\multirowsetup}{\centering}
\begin{tabular}{c|cc|cc|cc|cc}
    \toprule
    \rotatebox{0}{Dataset} & \multicolumn{4}{c|}{ETTh1} & \multicolumn{4}{c}{ETTm1} \\
    \cmidrule(lr){2-9}
    Models & \multicolumn{2}{c}{\textbf{KMB-DF}} & \multicolumn{2}{c|}{DF} & \multicolumn{2}{c}{\textbf{KMB-DF}} & \multicolumn{2}{c}{DF} \\
    \cmidrule(lr){2-3} \cmidrule(lr){4-5} \cmidrule(lr){6-7} \cmidrule(lr){8-9}
    Metrics & MSE & MAE & MSE & MAE & MSE & MAE & MSE & MAE \\
    \midrule

96 & 0.372$_{\pm 0.002}$ & 0.390$_{\pm 0.001}$ & 0.373$_{\pm 0.000}$ & 0.391$_{\pm 0.000}$ & 0.320$_{\pm 0.004}$ & 0.356$_{\pm 0.003}$ & 0.319$_{\pm 0.004}$ & 0.357$_{\pm 0.002}$ \\
192 & 0.425$_{\pm 0.001}$ & 0.421$_{\pm 0.001}$ & 0.427$_{\pm 0.000}$ & 0.421$_{\pm 0.000}$ & 0.354$_{\pm 0.002}$ & 0.380$_{\pm 0.001}$ & 0.359$_{\pm 0.005}$ & 0.382$_{\pm 0.001}$ \\
336 & 0.459$_{\pm 0.001}$ & 0.437$_{\pm 0.001}$ & 0.468$_{\pm 0.002}$ & 0.442$_{\pm 0.001}$ & 0.385$_{\pm 0.002}$ & 0.402$_{\pm 0.001}$ & 0.390$_{\pm 0.002}$ & 0.402$_{\pm 0.001}$ \\
720 & 0.462$_{\pm 0.013}$ & 0.463$_{\pm 0.006}$ & 0.475$_{\pm 0.003}$ & 0.467$_{\pm 0.002}$ & 0.443$_{\pm 0.001}$ & 0.434$_{\pm 0.000}$ & 0.446$_{\pm 0.000}$ & 0.435$_{\pm 0.000}$ \\
\cmidrule(lr){1-9}
Avg & 0.429$_{\pm 0.004}$ & 0.428$_{\pm 0.002}$ & 0.436$_{\pm 0.001}$ & 0.430$_{\pm 0.001}$ & 0.375$_{\pm 0.002}$ & 0.393$_{\pm 0.001}$ & 0.379$_{\pm 0.002}$ & 0.394$_{\pm 0.001}$ \\
    \bottomrule
\end{tabular}
\end{table}



\subsection{Complexity Analysis}\label{sec:comp_app}

We empirically evaluate the computational overhead of KMB-DF, as illustrated in \autoref{fig:complex}. The experiments are conducted with a batch size of 128, variable dimension $\D=21$, and the number of balancing functions $\mathrm{K}=3$. As the forecast horizon $\T$ extends from 32 to 720, the running time for both forward and backward passes exhibits a marginal upward trend. This is expected, as a larger $\T$ increases the length of the joint sequences $\mathbf{Z}$, thereby slightly scaling the kernel matrix calculations required for moment balancing. However, the absolute overhead remains negligible; even at the longest horizon of $\T=720$, the forward pass takes less than 0.73 ms, and the backward pass requires approximately 0.31 ms. Crucially, these additional computations are exclusive to the training phase. During inference, the KMB-DF objective is removed, ensuring that the model's inference speed is unaffected. Consequently, KMB-DF enhances forecasting performance with minimal training cost and \textit{zero} additional inference latency.

\begin{figure}[t]
\begin{center}
\subfigure[Running time in the forward phase.]{\includegraphics[width=0.48\linewidth]{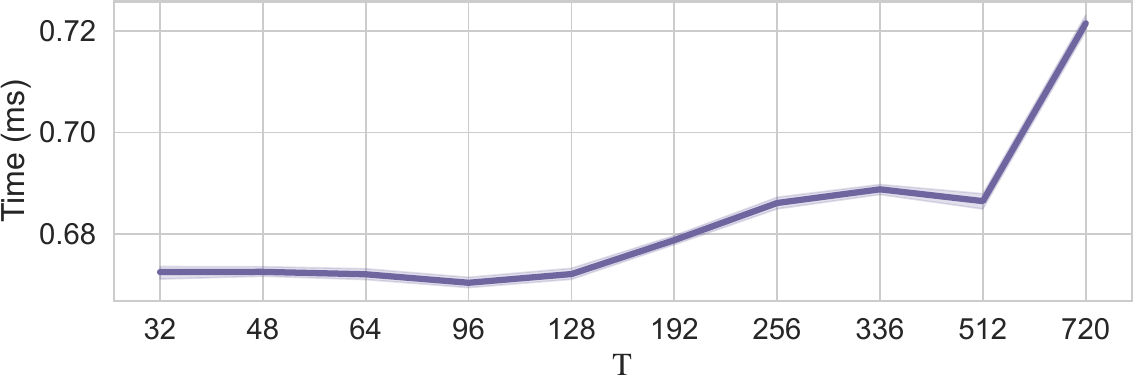}}
\subfigure[Running time in the backward phase.]{\includegraphics[width=0.48\linewidth]{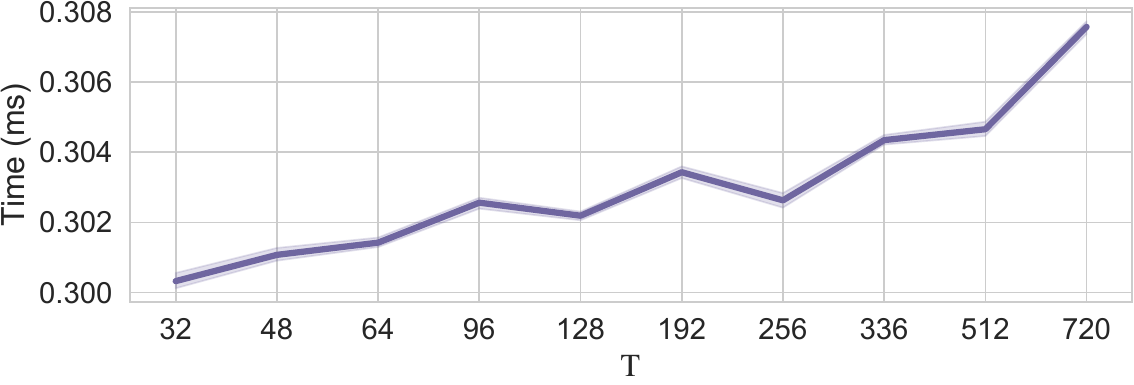}}
\caption{Running time (ms) with varying forecast horizons.}
\label{fig:complex}
\end{center}
\end{figure}

\end{document}